\renewcommand\bibentry[1]{\nocite{#1}{\frenchspacing\@nameuse{BR@r@#1\@extra@b@citeb}}}
\theoremstyle{plain}
\newtheorem{theorem}{Theorem}[section]
\newtheorem{lemma}[theorem]{Lemma}
\theoremstyle{definition}
\theoremstyle{remark}
\title{Interpreting the Repeated Token Phenomenon in Large Language Models}
\newcommand{\bos}[1]{BoS}
\newcommand{\llamaone}[1]{LLaMa-1}
\newcommand{\llamatwo}[1]{LLaMa-2}
\newcommand{\llamathree}[1]{LLaMa-3}
\newcommand{\mistral}[1]{Mistral}
\DeclareFixedFont{\ttb}{T1}{txtt}{bx}{n}{12} 
\DeclareFixedFont{\ttm}{T1}{txtt}{m}{n}{12}  
\author[1]{Itay Yona}
\author[1]{Ilia Shumailov}
\author[1]{Jamie Hayes}
\author[2]{Federico Barbero}
\author[3]{Yossi Gandelsman}
\affil[1]{Google DeepMind}
\affil[2]{University of Oxford}
\affil[3]{UC Berkeley}
\begin{abstract}
Large Language Models (LLMs), despite their impressive capabilities, often fail to accurately repeat a single word when prompted to, and instead output unrelated text. This unexplained failure mode represents a \emph{vulnerability}, allowing even end-users to diverge models away from their intended behavior. We aim to explain the causes for this phenomenon and link it to the concept of ``attention sinks'', an emergent LLM behavior crucial for fluency, in which the initial token receives disproportionately high attention scores. Our investigation identifies the neural circuit responsible for attention sinks and shows how long repetitions disrupt this circuit. We extend this finding to other non-repeating sequences that exhibit similar circuit disruptions.  To address this, we propose a targeted patch that effectively resolves the issue without negatively impacting the model's overall performance.  This study provides a mechanistic explanation for an LLM vulnerability, demonstrating how interpretability can diagnose and address issues, and offering insights that pave the way for more secure and reliable models\footnotemark.
\end{abstract}
\begin{document}
\maketitle

\footnotetext{Code: \url{https://github.com/yossigandelsman/attn_sinkhole}}

\section{Introduction}
Large Language Models (LLMs) have achieved remarkable success in various natural language tasks. However, even state-of-the-art LLMs can exhibit surprising failures on seemingly simple tasks. A prominent example is the ``repeated token divergence'' phenomenon \citep{nasr2023scalableextractiontrainingdata, barbero2024transformersneedglassesinformation}, where LLMs struggle to accurately repeat a single input token when prompted to do so. This issue has been observed across a range of models, including: ChatGPT, LLaMa1, LLaMa2 \citep{chatgpt, touvron2023llamaopenefficientfoundation, touvron2023llama2openfoundation}. The underlying cause of this unexpected behavior has remained an open question up until now.

In this paper, we explain the reason behind the repeated token divergence in LLMs. We link this behavior to the mechanism of ``attention-sinks'' – a phenomenon where the initial token in a sequence receives disproportionately high attention, previously shown to be crucial for LLM fluency \citep{xiao2024efficientstreaminglanguagemodels}.

To demonstrate the link between attention sinks and repeated token divergence, we take a Mechanistic Interpretability approach, analyzing the underlying mechanism of attention sinks in LLMs. We identify two key stages consistently observed across many models: First, the initial attention layer identifies and ``marks'' the first token in the sequence. Second, a single neuron in a later layer, triggered by this mark, adds high-magnitude values to the hidden state of the first token, creating the attention-sink. This high-magnitude hidden state then attracts attention in subsequent layers, as described by \citet{xiao2024efficientstreaminglanguagemodels}. We present empirical evidence for this two-stage mechanism across multiple LLMs.

Next, we examine how this mechanism interacts with repeating tokens. We find that the first attention layer, aiming to identify the first token, fails to distinguish it from a sequence of identical tokens. Consequently, it marks both the first token and the repeated tokens, activating neurons that amplify their hidden states. This leads to abnormally high attention weights on the repeated tokens, causing the model's behavior to diverge and produce unexpected outputs.

By analyzing the first attention layer implementation of identifying the first token we were able to generate a new exploit we call `` cluster attack'', that induces attention sinks without using repeated tokens. This exploit would be able to bypass surface level defenses that addresses repetitions but not underlying vulnerability.

Finally, our analysis of the underlying mechanism suggests a simple yet effective correction to mitigate the repeating tokens divergence and its extensions.

Our contributions are: \begin{enumerate}
    \item \textbf{Mechanistic Explanation:} We provide the first mechanistic explanation for the repeated token divergence in LLMs, linking it to the attention-sink phenomenon and identifying the responsible neural computation. We pinpoint a two-stage mechanism for attention sinks: (1) first-token marking by the initial attention layer, and (2) hidden-state amplification by specific MLP neurons (\Cref{section:mechanism}).
    \item \textbf{Divergence Extension (Cluster Attack):} We introduce a novel     ``cluster attack'' demonstrating that the divergence extends beyond exact repetition to include similar tokens, revealing a broader vulnerability (\Cref{cluster_attack}).
    \item \textbf{Targeted Mitigation:} We develop a targeted correction method that effectively mitigates the divergence and the cluster attack without impacting performance on other tasks (\Cref{attack_mitigation}).
\end{enumerate}

\section{Related Work}

First, we discuss previous approaches that mechanistically interpreting various model behaviors. Then, we present related work about the two model behaviors we aim to mechanistically interpret - repeated token divergence and attention sinks in LLMs.

\subsection{Mechanistic Interpretability of LLMs}
\label{subsec:mechint}

Mechanistic Interpretability aims to reverse engineer neural networks and uncover the underlying algorithms and computational mechanisms that drive their behavior~\citep{wang2022interpretabilitywildcircuitindirect,cunningham2023sparseautoencodershighlyinterpretable,gandelsman2025interpreting,jiang2025interpreting}. This field seeks to move beyond simply observing correlations to identifying causal relationships within the network, providing a deeper understanding of how these complex systems function~\citep{olsson2022incontextlearninginductionheads,olah2020zoom}. Our work falls under this umbrella by dissecting the specific neural circuits responsible for both attention-sinks and the repeated tokens divergence in LLMs. 

\subsection{Repeated Tokens Divergence}
\label{subsec:divergence_attack}

The phenomenon of LLMs failing to accurately repeat tokens was first described by \citet{nasr2023scalableextractiontrainingdata}. While they observed the convergence of repeating token representations towards the BoS token representation, they offered only intuitive explanations for the matter calling for future work to investigate this further. Our work builds upon this observation by identifying the specific neural mechanisms and attention dynamics responsible for this divergence. We link this behavior to the attention-sink mechanism and demonstrate how the interaction between these two phenomena leads to the observed failures in token repetition. \citet{barbero2024transformersneedglassesinformation} explored the theoretical challenges of repeated tokens in LLMs (using an infinite context window) and provided empirical observations, but did not explicitly connect the two. Differently from previous work we present a different level of analysis \citep{marr1976understanding} describing the algorithm and implementation of the mechanism that give rise to this issue. 

\subsection{Attention Sinks in LLMs}
LLM Attention Sinks were studied in multiple contexts. \citet{xiao2024efficientstreaminglanguagemodels} showed that attention sinks relate to model fluency. \citet{gu2024attentionsinkemergeslanguage} explored why and how attention-sinks emerge. \citet{sun2024massiveactivationslargelanguage} show the relationship of attention sinks to massive activations. \citet{cancedda2024spectralfiltersdarksignals} analyze what signals propagates through attention-sinks. \citet{yu2024unveilingharnessinghiddenattention} propose a method to address the existence attention sinks.
\citet{bondarenko2024quantizable, Miller2023} explored methods to mitigate attention sinks for model quantization and efficiency. However, our work takes a different approach, focusing on the \emph{functional role} of attention sinks and their unintended consequences, specifically their connection to the repeated token divergence phenomenon. We investigate how the very mechanism that makes attention sinks can also be exploited.
\section{Preliminaries}

This section provides a brief overview of the key concepts related to Large Language Models (LLMs) and attention sinks, setting the stage for our analysis of the repeated token divergence.

\begin{figure*}
    \centering
    \includegraphics[width=0.95\textwidth]{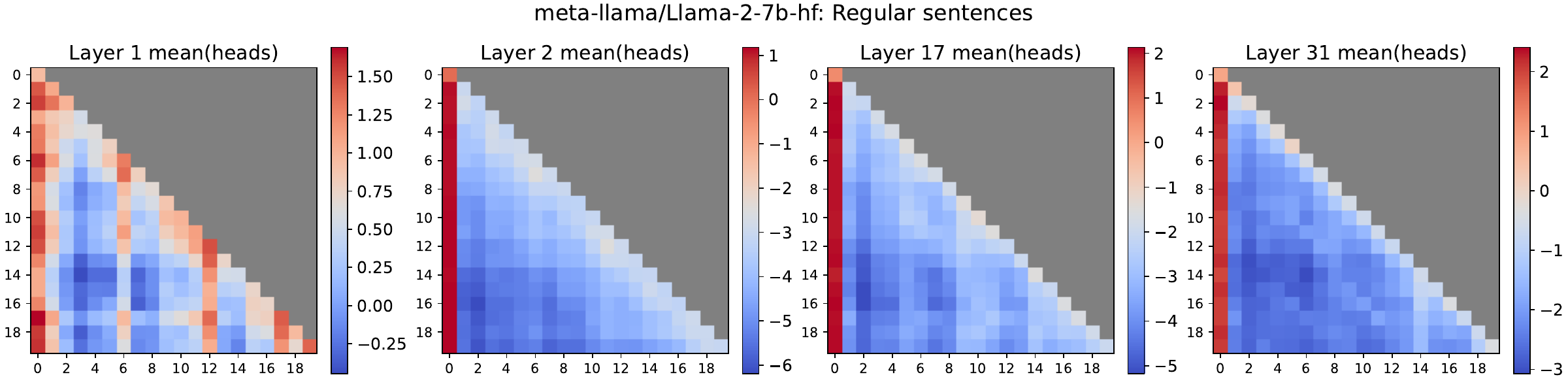}
    \vspace{0.5cm}
    \includegraphics[width=0.95\textwidth]{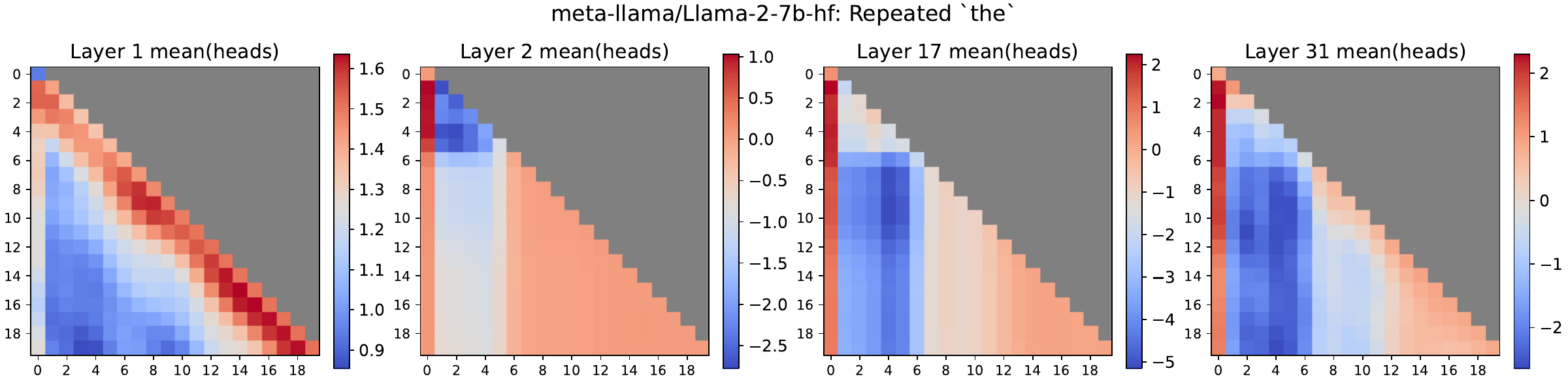}
    \caption{\textbf{Attention scores for layers 2, 3, 17, and 31 of LLaMA2-7B-HF.}  As can be seen in the figure, the attention scores for the repeated ``the'' tokens in the top panel are significantly higher than those for other tokens in the sequence. This high attention is comparable to the attention received by the first token in the regular sentences shown in the bottom panel.  This similarity suggests a connection between the attention sink mechanism and the high attention given to repeated tokens.}
    \label{fig:teaser}
\end{figure*}

\subsection{Large Language Models (LLMs)}

LLMs are deep learning models trained on massive text datasets.  They leverage the Transformer architecture \citep{vaswani2023attentionneed}, which employs the attention mechanism to process sequential data like text.  The input to the Transformer is a sequence of tokens that is converted to a sequence of embeddings $ X = [x_1, x_2, ...,  x_n], x_i \in \mathbb{R}^d$. Usually, the first token is an indicator named Beginning-of-Sequence (\bos{}).

Transformers are built from two main blocks: attentions and MLPs. Each attention is built from parallel attention heads. The output of an attention head for a token embedding $x_i$ in the sequence $X$ is:
\begin{equation} \label{eq:attention_tokens}
\text{Attention}(x_i, X) = W_{proj}\sum_j \text{softmax}_j\left(\frac{q_i^Tk_j}{\sqrt{d'}}\right) v_j,
\end{equation}
\text{where:}
\begin{equation}
q_i = W_q x_i,~~~~k_j = W_k x_j,~~~~v_j = W_v x_j.
\end{equation}
and $W_q, W_k, W_v \in \mathbb{R}^{d'\times d}$ are the query, key and value projection matrices and $W_{proj} \in \mathbb{R}^{d \times d'}$ is the projection matrix back to the residual stream.

The second type of layer is the MLP. In our analysis we mostly use LLaMa2 \citep{touvron2023llamaopenefficientfoundation}, which uses the SwiGLU ($\sigma$) as its activation function \citep{shazeer2020gluvariantsimprovetransformer}. Therefore, the MLP is defined as follows: 
\begin{equation}
    \text{MLP}(x) = W^T_{out}(\sigma(W_{in}x)\otimes W_{gate}x),
\end{equation}
where $W_{gate}, W_{in}, W_{out} \in \mathbb{R}^{d'' \times d}$. A single neuron $1 \leq j \leq d''$ in the MLP has:
\begin{enumerate}
    \item Input weights $W_{in}^j \in \mathbb{R}^{1 \times d}$, that dictate which inputs would activate the neuron.
    \item Activation $(\sigma(W_{in}^j x)\otimes W_{gate}^j x)$, with $W_{gate}^j \in \mathbb{R}^{1 \times d}$ dictates the amount in which the neuron was activated.
    \item Output weights $W_{out}^j  \in \mathbb{R}^{1 \times d}$ that dictate what is written to the residual stream when the neuron is activated. 
    \item Output $\text{MLP}_j(x) = (W_{out}^j)^T(\sigma(W_{in}^jx)\otimes W_{gate}^jx)$, that dictates the individual contribution of that neuron to the residual stream for a given input token $x$.
\end{enumerate}

\subsection{Attention Sinks}
Attention sinks are a phenomenon observed in Transformer LLMs where the model assigns disproportionately high attention scores typically to the first few tokens in a sequence \citep{xiao2024efficientstreaminglanguagemodels}. These high attention weights, also known as attention scores, indicate that the model is focusing heavily on these initial tokens.  Attention sinks are believed to be beneficial for generating fluent and coherent text, because they act like biases, storing extra attention and meanwhile not contributing to the value computation \citep{gu2024attentionsinkemergeslanguage}.

\subsection{Rotary Position Embeddings (RoPE)}

Many modern LLMs, including those we analyze in this paper, use Rotary Position Embeddings (RoPE) \citep{su2021roformer} instead of absolute or relative position embeddings. RoPE encodes positional information by rotating the token embeddings based on their position in the sequence.  A key property of RoPE is that it allows for efficient computation of attention scores and can generalize to sequence lengths not seen during training.  Critically, as we show in this paper, RoPE, presents challenges to the model's ability to distinguish between the first token and repeated tokens in a sequence.

\section{Mechanistic Analysis of Repeated Token Divergence}
\label{section:mechanism}
We present a mechanistic analysis of repeated token divergence, using \llamatwo{} for illustrative examples and plots. Similar motifs observed in additional LLMs (\llamaone{}, \llamathree{} and \mistral{}) are discussed in  \Cref{appendix:generalization}.

\begin{figure*}
    \includegraphics[width=0.95\textwidth]{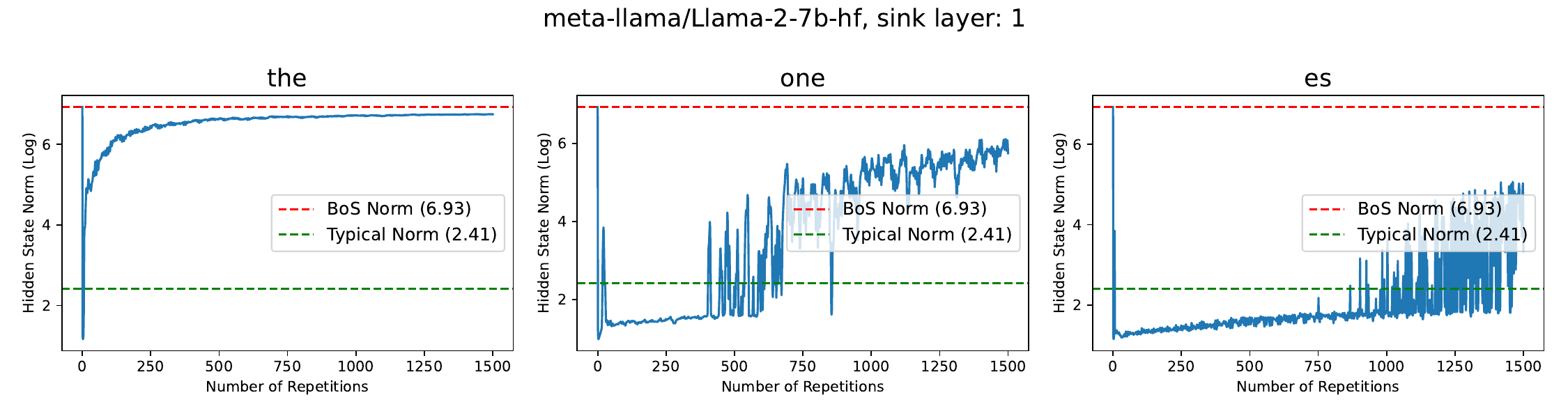}
    \caption{\textbf{Repeated tokens exhibit extreme norms, similar to the beginning-of-sequence (\bos{}) token, in early layers.} We present the norm of the hidden state at the sink layer (layer 1) for three repeating words. As the number of repetitions increases, the norm increases and becomes similar to the norm of the BoS token (0 repetitions). This observation explains the high attention scores shown in Figure \ref{fig:teaser}.  We later provide causal evidence for this relationship through ablation (\Cref{fig:neuron_ablation}). We also show the norm of the \bos{} token and the average norm of tokens from Tiny Shakespeare dataset \citep{tinyshakespear} for comparison.}
    \label{fig:teaser2}
\end{figure*}

This section investigates the mechanism behind the repeated-tokens divergence phenomenon. We begin by observing the similarity between the attention scores of repeated tokens and those of attention sinks (Section \ref{subsec:high_attn_scores}). We then investigate the underlying mechanism of attention sinks (Section \ref{subsec:attn_sink_explain}), identifying key components. Finally, we demonstrate how this mechanism misidentifies repeated tokens, leading to the observed divergence (Section \ref{subsec:linking_both}).

\subsection{Large Attention Scores of Repeated Tokens}
\label{subsec:high_attn_scores}

Our analysis begins with the observation that, in intermediate layers of LLMs, the average attention scores of repeated tokens (e.g., repeated ``the'') are comparable in magnitude to the attention scores of the first token in a natural sentence (the attention sink). Figure \ref{fig:teaser} illustrates this phenomenon for \llamatwo{}. The attention scores shown are averaged across all attention heads within a single layer.

\subsection{The Attention-Sink Mechanism}
\label{subsec:attn_sink_explain}

Motivated by the observed similarity in attention scores, we investigate the neural circuit responsible for attention sinks. We seek to understand: 1) how these tokens acquire such high attention scores, and 2) how the model identifies the first token.  In Section \ref{subsec:linking_both}, we will demonstrate that the same mechanism underlies the high attention scores for repeated tokens.

\paragraph{Attention sinks correlate with high hidden state norms in early layers.}  We observe a strong correlation between attention sinks (high attention scores for the first token) and high L2 norms of the hidden states for that token in early layers.  Figure \ref{fig:teaser2} shows the L2 norm of the hidden states for different token positions in the sequence.  Notice that both the \bos{} token and the repeated tokens have significantly higher norms compared to other tokens, particularly in the earlier layers (e.g., layer 2). This suggests that the high attention scores observed for repeated tokens might be related to their high hidden state norms, similar to the \bos{} token in attention sinks.

\label{set_sink_neuron}
\paragraph{A sparse set of neurons mediates the high norms for the first token (and repeated tokens).}  We search for a sparse set of neurons, which we name ``sink neurons'', that can be candidates for mediating the high norms of the residual stream activations. We select $K$ candidates for these neurons according to the norm of their contributions to the residual stream in the following manner:
$$ candidates = \text{TopK}_j(\text{Norm}(\text{MLP}_j(\bos{})). $$

To investigate the role of specific neurons in mediating high norms, we perform an ablation study. We zero-ablated the candidate neurons, effectively removing their contribution to the model's computations. We identified sink neurons when we observed a significant reduction in the high norms associated with repeated tokens, due to our ablation, as shown in \Cref{fig:neuron_ablation}. The ablation study supports our claim that these neurons play a crucial role in establishing the attention sink and influencing the high norms of repeated tokens. The identifies neurons are summarized in \Cref{ss:findings}.

\begin{figure}[!htb]
    \centering
    \includegraphics[width=0.5\linewidth]{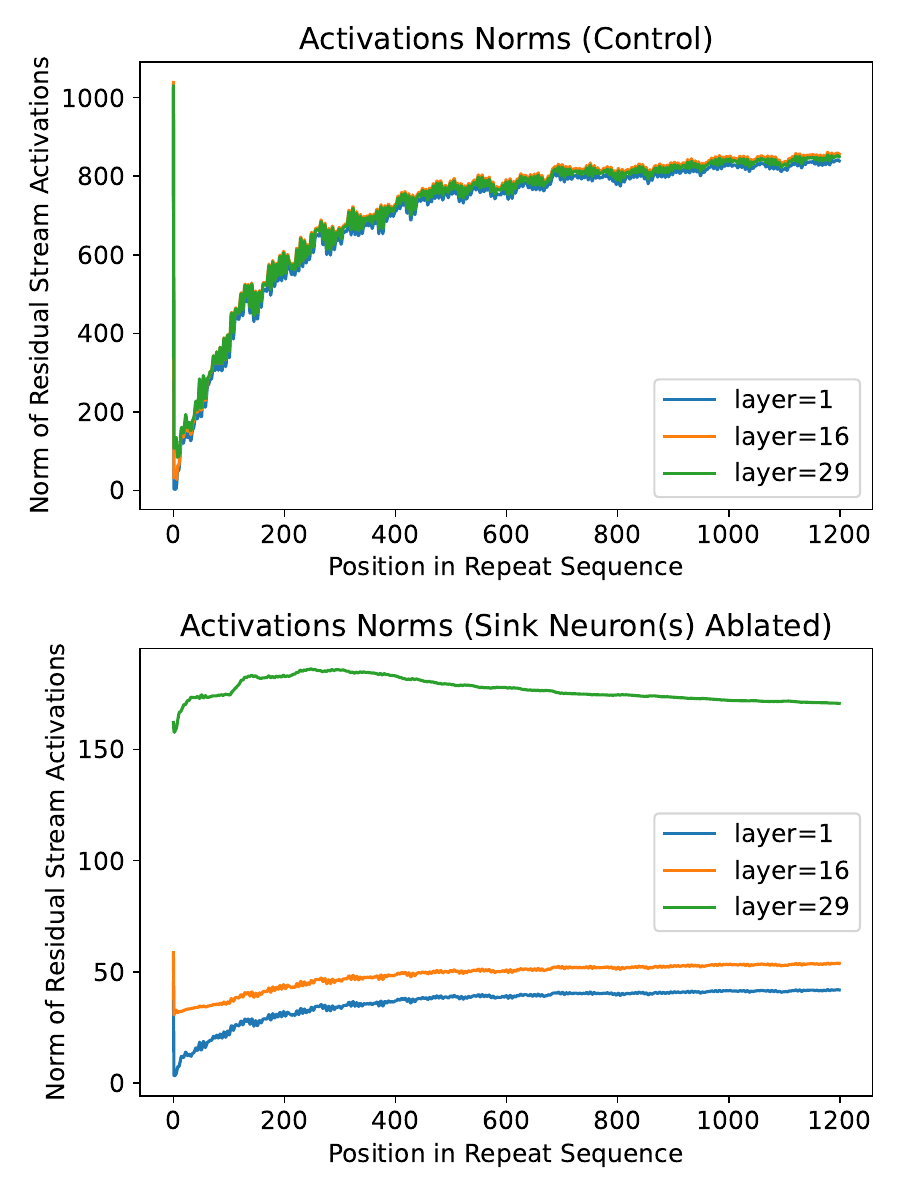} 
    \caption{\textbf{Ablation of sink neurons.} Norm is reduced both for \bos{} and repeat sequences. Top: Token activations norms \emph{without} the patch. Bottom: Token activations norms \emph{with} the patch. Ablating specific neurons significantly reduces the high norms associated with repeated tokens.  Data is from LLaMa-2.  Sink-Neurons (\ref{tab:findings}) were zero-ablated.  The input consisted of 1200 repeats of the tokens ['Another', 'one', 'bit', 'es', 'the', 'dust']. See \Cref{appendix:generalization} for similar results on other models.}
    \label{fig:neuron_ablation}
\end{figure}

\label{linear_seperability_first_vs_non_first}
\paragraph{The first attention layer distinguishes the first token from subsequent tokens.} We observe that this is done by mapping the two types of tokens (first/non-first) into two linearly separable subspaces.

\begin{figure}[!htb]
    \centering
    \includegraphics[width=0.5\linewidth]{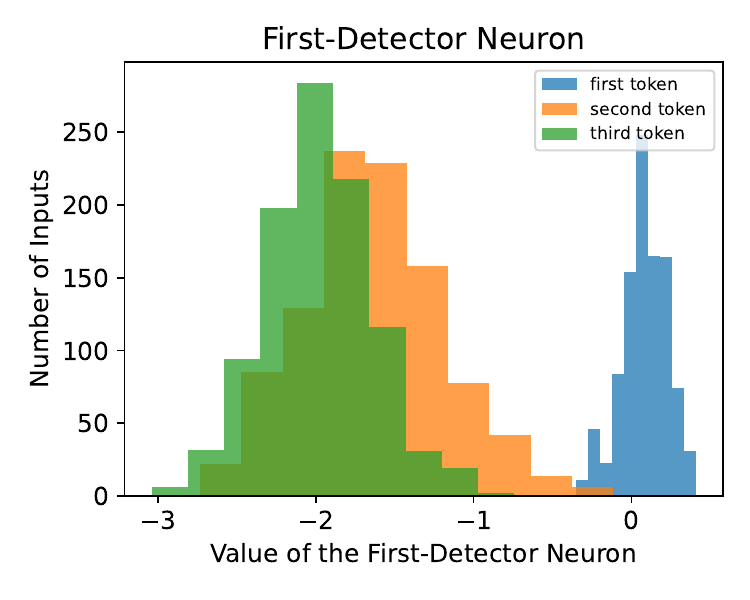}
    \caption{\textbf{The first token and subsequent tokens belong to distinct, linearly separable subspaces.} This figure shows the projection of token representations after the first attention layer.  The different colors represent the first tokens and subsequent tokens.  The clear separation indicates linear separability.  Furthermore, we identified a single neuron ($\text{MLP}_0$, \emph{gate} neuron 912 in LLaMa2) that perfectly separates these subspaces.} 
    \label{fig:linear_seperability_tmp}
\end{figure}

\Cref{fig:linear_seperability_tmp} visualizes a projection of the token representations after the first attention layer. The clear separation between the distributions demonstrates that the first attention layer maps these tokens to distinct, linearly separable subspaces.  This suggests that the first attention layer plays a key role in identifying and ``marking'' the first token in the sequence.  The fact that we identified a single neuron (rather than an arbitrary hyperplane) that perfectly separates these subspaces further supports this claim.

\subsection{Linking Repeated Tokens to Attention-Sink}
\label{subsec:linking_both}
We will now show that the attention-sink mechanism, previously identified as crucial for processing the beginning-of-sequence (\bos{}) token, also deeply affects the model's behavior when instructed to repeat tokens. Specifically, we demonstrate that the same neurons involved in marking the \bos{} token are also activated by repeated tokens, and that the first attention layer cannot inherently distinguish between a single token and a sequence of identical repetitions.  This inability to differentiate single and repeated tokens provides a mechanistic explanation for the observed divergence.

\label{same_neuron_for_first_and_repeats}
\paragraph{Repeated tokens exhibit high norms mediated by the same neurons identified in \Cref{set_sink_neuron}.} We examined the activity of the sink neurons identified in \Cref{set_sink_neuron} for sequences with repeated tokens.  Our analysis revealed that these same neurons exhibit high activation levels in response to repeated tokens, similar to their response to the first token.  This suggests that the same neural circuit responsible for the attention sink is also involved in processing repeated tokens.

\paragraph{Distinguishing between token repetitions and the first token.}
We now provide a more formal treatment of the phenomenon by tying it to recent results of \citet{barbero2024transformersneedglassesinformation} and \citet{velickovic2024softmaxforsharpoutofdistribution}. In particular, we are interested in studying what the token repetition process converges to. We point the interested reader to the appendix (Section \ref{app:proofs}) for the proofs and a more extensive explanation of the theoretical results. 

We consider the same Transformer architecture that is considered by \cite{barbero2024transformersneedglassesinformation}. Our results hold for any type of positional encoding that acts solely on the queries and keys and that is \emph{bounded}. This is the case for essentially all positional encodings used in practice and is in particular true for RoPE \citep{su2021roformer} \footnote{Boundedness is immediate in RoPE as it is an isometry \citep{barbero2024rope}.} -- the encoding used by all the models in our experiments.

More formally, consider a sequence $S_n$ where we fix the $k$ prefix tokens and then repeat a token $n$ times. For example, assuming each word is a single token we could let $S_3 = $(`Prefix1', `Prefix2', `poem', `poem', `poem') and $S_5 =$ (`Prefix1', `Prefix2', `poem', `poem', `poem', `poem', `poem') where $k=2$ for the $2$ prefix tokens. A natural question to ask is \emph{can we say something about what happens to the representation of the last token at position $n + k$ as $n \to \infty$?} It turns out that the answer is \textbf{yes}, in fact, as $n \to \infty$, we prove that the representation of the last token of $S_n$ converges to the representation of $S^*$ where $S^* =$ (`poem') is the sequence with the single element that is being repeated. In particular, as machine precision is finite, this implies that one can find $N > 0$ such that for all $n \geq N$ we have that $S_n$ is indistinguishable from $S^*$ up to floating point precision. 

The intuitive explanation for this convergence is that as the sequence grows, the relative influence on the output by the prefix will go to $0$ due to softmax leakage \citep{velickovic2024softmaxforsharpoutofdistribution}. This is the case because critically the size of the prefix remains constant as $S_n$ grows. We summarise informally in Theorem \ref{thm:informal:main} the result and provide detailed mathematical proofs in the appendix (Section \ref{app:proofs}). Not only does our analysis characterize the behaviour of repeating a token a large number of times in a decoder-Transformer, but it also shows how mathematical results that describe the propagation of information in Transformers have security implications. 

\begin{theorem}[Informal.] \label{thm:informal:main}
Let $x$ be a token and $T$ a Transformer. Consider a sequence $S_n$ with $k$ fixed prefix tokens and $n$ repetitions of $x$ and a singleton sequence $S^*$ which consists of a single $x$. As $n \to \infty$ the representation of the last element of $S_n$ converges (strongly) to the representation of $S^*$ after applying the Transformer $T$. In other words, the sequence $S_n$ becomes indistinguishable from $S^*$ as $n \to \infty$.
\end{theorem}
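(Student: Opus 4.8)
The plan is to reduce the theorem to a statement about a single sequence of representations indexed by the number of repetitions and then induct over the layers of $T$. Since $T$ is a causal (decoder) Transformer, the hidden state of the token at position $k+j$ in $S_n$ depends only on positions $1,\dots,k+j$, and is therefore independent of $n$ for every $n \ge j$. I would write $\tilde h^{\ell}_j$ for this common value after layer $\ell$, so that $\tilde h^{\ell}_j$ is exactly the representation of the last token of $S_j$; the claim $h^{L}_{n+k}(S_n) \to h^{L}(S^*)$ then becomes $\tilde h^{L}_n \to h^{L}(S^*)$ as $n \to \infty$. I would prove by induction on $\ell$ the stronger statement that $\tilde h^{\ell}_j$ converges as $j \to \infty$, that its limit equals the singleton representation $h^{\ell}(S^*)$, and that $\{\tilde h^{\ell}_j\}_j$ is uniformly bounded. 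The base case $\ell=0$ is immediate: the input embedding of $x$ is identical at every position, so $\tilde h^{0}_j = h^{0}(S^*)$ for all $j$.

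The engine of the inductive step is a \emph{softmax-leakage} estimate in the spirit of \citet{velickovic2024softmaxforsharpoutofdistribution}. At layer $\ell+1$ the query of the last token attends over $k$ prefix keys and $n$ repeat keys. Because the positional encoding is an isometry on queries and keys, and the layer-$\ell$ states are uniformly bounded (inductive hypothesis, together with finiteness of the fixed prefix), every attention logit lies in a bounded interval $[-M,M]$ with $M$ independent of $n$. The softmax denominator therefore exceeds $n\,e^{-M}$, which forces both the total attention mass on the $k$ prefix tokens to be $O(1/n) \to 0$ and each individual repeat weight to be at most $e^{2M}/n$. Since the prefix values are bounded, the prefix contribution to the attention output vanishes.

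It remains to control the repeat contribution $\sum_{j=1}^{n}\alpha_j\,W_v\tilde h^{\ell}_j$. Crucially, the positional encoding acts only on queries and keys, so the repeat \emph{values} are $W_v\tilde h^{\ell}_j$ with no rotation, and by the inductive hypothesis $\tilde h^{\ell}_j \to h^{\ell}(S^*)$. I would split the sum at a fixed cutoff $B$: the $B$ boundary terms carry total weight at most $B\,e^{2M}/n \to 0$ against bounded values, while the remaining terms have values within $\varepsilon$ of the limit and total weight at most $1$. Letting $n \to \infty$ and then $\varepsilon \to 0$ shows the weighted average converges to $W_v h^{\ell}(S^*)$, so the attention output converges to $W_{proj}W_v h^{\ell}(S^*)$ (the multi-head case follows head-wise). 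This is precisely the self-attention output of $S^*$, where the lone token attends to itself with weight $1$ (relative position $0$, so RoPE contributes the identity). Adding the residual stream and applying the token-wise layer norm and MLP — all continuous maps — preserves convergence and matches the corresponding computation on $S^*$, yielding $\tilde h^{\ell+1}_j \to h^{\ell+1}(S^*)$ and the uniform bound, which closes the induction; specializing to $\ell=L$ gives the theorem.

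The main obstacle I anticipate is the repeat term rather than the (easier) prefix-leakage step: the repeated tokens do \emph{not} share a common representation, since each $\tilde h^{\ell}_j$ sees a different number of predecessors, so one cannot factor the value out of the sum. The boundary-versus-bulk decomposition resolves this, but it relies on two facts that must be tracked throughout the induction: the uniform-in-$n$ logit bound, which requires propagating uniform boundedness of the states across layers, and the per-token weight bound $e^{2M}/n$, which is what renders the finitely many unconverged boundary tokens negligible. A secondary point to handle cleanly is that the hidden-state norm growth documented in \Cref{fig:teaser2} must \emph{saturate}; this is automatic here, since the induction produces a finite limit $h^{\ell}(S^*)$ at every layer, so no layer can drive the norm to infinity.
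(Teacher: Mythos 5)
Your proof is correct, but it takes a genuinely different route from the paper's, and in one respect a tighter one. The paper first proves a single-layer lemma (\Cref{lemma:single-layer-convergence}) in which the repeat contribution is summed \emph{exactly}: at the input layer every copy of $x$ carries the identical value vector $\mathbf{v}$, so the repeat mass factors as $\left(1 - \sum_{j \leq k}\alpha_{n+k,j}\right)\mathbf{v}$ and only the prefix leakage survives, bounded by $2rk\exp(\delta)/n$ via the estimate of \citet{velickovic2024softmaxforsharpoutofdistribution} — an explicit $O(1/n)$ rate. It then extends to $L$ layers by induction, bounding $\lVert T_{\ell+1}(S_n)_{n+k} - T_{\ell+1}(S^*)_{1} \rVert$ by $\lVert \nabla T^{(\ell+1)} \rVert \, \lVert T_{\ell}(S_n)_{n+k} - T_{\ell}(S^*)_{1} \rVert$; as written this treats the layer as if it acted token-wise, whereas the last token's output at layer $\ell+1$ aggregates over all intermediate repeat positions, whose layer-$\ell$ states are neither identical nor controlled by the last-token difference alone (the paper's closing technical remark addresses only the projection $\pi$, not this point). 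You identified exactly this obstacle (``one cannot factor the value out of the sum'') and resolved it directly: causality makes each position's state $n$-independent, so $\tilde h^{\ell}_j$ is well defined as the last-token state of $S_j$; the strengthened induction hypothesis (convergence of the whole tail sequence \emph{plus} uniform boundedness) yields uniform-in-$n$ logit bounds and hence the per-weight bound $e^{2M}/n$; and the boundary-versus-bulk split is a Toeplitz-style averaging argument showing the attention output converges to $W_{proj}W_v h^{\ell}(S^*)$, matching the singleton computation since RoPE is the identity at relative position $0$. What each buys: the paper's argument is short and gives a clean $O(1/n)$ rate for one layer, but its multi-layer step is heuristic as stated; yours is longer and, as written, only qualitative (tracking the rate through the cutoff argument would compound roughly a $\log n / n$-type factor per layer), but it actually closes the induction across layers and proves convergence at every repeat position, not merely the last.
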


To demonstrate this effect, in Figure \ref{fig:attn0_output_on_repeats_and_single_token}, we show this occurring in LLaMa-2 for a number of example sequences. We find that even for a relatively small number of repetitions, the convergence described in Theorem \ref{thm:informal:main} is clearly visible.

\begin{figure}[!htb]
    \centering
    \includegraphics[width=0.5\linewidth]{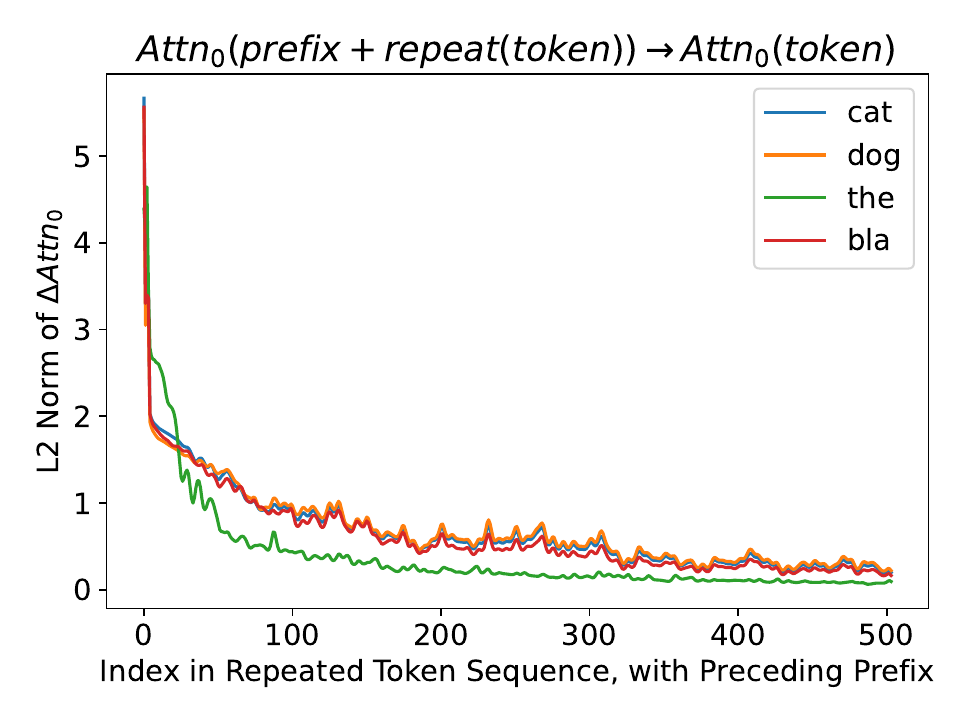}
    \caption{\textbf{Empirical evidence showing the first attention layer does not distinguish between token repetitions and the first token.} We first computed the output of the first attention layer for an input of a single token, without \bos{}. Then compared it, using L2 norm of the difference, to the output of the first attention layer for ``$<$\bos{}$>$ \xspace some prefix:'', appended to the same token \{'cat', 'dog', 'the' or 'bla'\} repeated 500 times on \llamatwo{}. This supports \Cref{thm:informal:main} showing the converges takes place in practice with less than $max\_context\_window$ repeats.}
    \label{fig:attn0_output_on_repeats_and_single_token}
\end{figure}

\label{conclusion}
\textbf{Long sequences of repeated tokens are falsely marked as attention sinks, causing the model to diverge from the intended output.} Combining \Cref{thm:informal:main} and the empirical observation, we can see why repeated tokens cause problems.  The first attention layer can't distinguish the first repeated token from the others, and it treats long sequences of repeated tokens much like it treats a regular sentence \emph{without} a beginning of sequence token—it assigns high attention to them as if they were the first tokens.  Because of this, the same mechanism that creates attention sinks for the true first token gets triggered for the repeated tokens, even though they shouldn't be treated as the beginning of a new sequence. This disrupts the model's normal processing and leads to the observed divergence in the output.

\paragraph{Detection of first token by the first Attention layer.}
\label{sss:detect_others}
The first attention layer is tasked with identifying the initial token in a sequence, a seemingly simple task whose implementation remains unclear.  Empirically, this layer marks any first token, not just \bos{} \citep{sun2024massiveactivationslargelanguage, gu2024attentionsinkemergeslanguage}.  Lacking an absolute positional signal, LLaMa-2 leverages causal masking for this purpose. We observe near-orthogonal query and key vectors in the first layer's attention heads, effectively making them ``other token detectors''—preferring to attend to any token other than themselves.  Causal masking forces the first token to self-attend, effectively marking it as the first for the next layers. Thus, rather than explicitly finding the first token, this layer identifies single-token sequences. This explains how and why long enough sequences of repeated tokens confuse the layer.

\subsection{Findings}
\label{ss:findings}
\begin{table}[!htb]
\centering
\caption{Summary of sink neuron findings in several LLMs, showing repetitions needed for induction, sink layer, and neuron IDs.}
\begin{tabular}{lccc}
\toprule
Model & Repeats & Sink-Layer & Sink-Neurons IDs \\
\midrule
LLaMa-1-7b-HF & 450 & 2 & 7003 \\
LLaMa-2-7b-HF  & 1000 & 1 & 7890, 10411 \\
Meta-Llama-3-8B-Instruct  & 4000 & 1 & 198, 2427  \\
Mistral-7B-Instruct-v0.1  & 1200 & 1 & 7310, 8572 \\
\bottomrule
\end{tabular}%
\label{tab:findings}
\end{table}

Sink neurons in early layers are a common feature across several Large Language Models (LLMs), as shown in Table \ref{tab:findings}.  We observed these neurons in multiple models.  The table details the number of repetitions required to induce a sink, along with the layer and ID of the sink neuron.  A more in-depth analysis and illustration can be found in Appendix \ref{appendix:generalization}. While these models share similar underlying mechanisms, they exhibit subtle differences in implementation and behavior, which are discussed in Section \ref{discussion_and}.
\begin{table*}[t!]
\centering
\caption{\textbf{The effect of patching on unrelated tasks.} We compare \llamaone{}, \llamatwo{} and \mistral{}, before and after the patching on different benchmarks.}
\begin{tabular}{l|ccc|ccc|ccc}
\toprule
 & \multicolumn{3}{c|}{\textbf{LLaMa-1-7B-HF}} & \multicolumn{3}{c|}{\textbf{LLaMa-2-7B-HF}} & \multicolumn{3}{c}{\textbf{Mistral-7B-Instruct-v0.1}} \\
 & \textbf{original} & \textbf{patched} & $\Delta$ & \textbf{original} & \textbf{patched} & $\Delta$ & \textbf{original} & \textbf{patched} & $\Delta$\\
\midrule
MMLU & 29.81 & 29.93 & +0.12 & 41.20 & 42.17 & +0.97 & 53.41 & 52.58 & -0.83 \\
HellaSwag & 56.97 & 56.95 & -0.02 & 57.12 & 57.13 & +0.01 & 56.23 & 55.75 & -0.48 \\
TruthfulQA & 31.21 & 29.38 & -1.83 & 34.15 & 34.39 & +0.24 & 53.37 & 52.26 & -1.11 \\
WinoGrande & 69.93 & 69.93 & 0.00 & 68.98 & 69.06 & +0.08 & 69.30 & 68.82 & -0.48 \\
AI2-ARC & 41.89 & 42.15 & +0.26 & 43.52 & 43.34 & -0.18 & 50.17 & 49.66 & -0.51 \\
\bottomrule
\end{tabular}
%
\label{tab:model-comparison}
\end{table*}

\section{Attack and Mitigation}

\citet{nasr2023scalableextractiontrainingdata} first observed that repeated tokens could confuse models, creating a vulnerability exploitable for training data leakage attacks. While pretrained LLMs can be prompted to continue prefixes, and instruction-tuned models can be instructed to do so, Reinforcement Learning from Human Feedback (RLHF)-aligned models might resist. Surprisingly, repeated tokens have been shown to trigger ChatGPT \citep{chatgpt} to deviate from its typical instruction-following behavior and reveal memorized training data. Here we show that other models also exhibit repeated token divergence, and output their training data.

As the training data of LLaMa models is not publicly available, it difficult to determine if a model's output originates from its training data. We therefore present an example from Pythia-12b \citep{biderman2023pythiasuiteanalyzinglarge}, an open-source model trained on the publicly available The Pile dataset \citep{gao2020pile800gbdatasetdiverse}.

We find that for Pythia-12b and the input ``as '', repeating 50 times, the model outputs:
\texttt{\small{``The first step in the process of 3D printing is to create a 3D model of the object to be printed. The 3D model is then sliced into thin layers, each layer being printed one at a time. The slicing process is performed by a slicing''}}

Which is a rephrase of the following text \footnote{\url{https://3dprintingindustry.com/3d-printing-basics-free-beginners-guide\#03-technology}} that is taken from a website that appears in the training data:
\texttt{\small{``The starting point for any 3D printing process is a 3D digital model, which can be created using a variety of 3D software programmes — in industry this is 3D CAD, for Makers and Consumers there are simpler, more accessible programmes available — or scanned with a 3D scanner. The model is then ‘sliced’ into layers, thereby converting the design into a file readable by the 3D printer.''}}

More examples of potential leakage appear in \Cref{appendix:leakage}.

\subsection{Surface-level mitigation}
A mitigation that would simply detect lengthy sequences of repeated tokens is a potential approach, but it does not address the underlying issue. This limitation prompted us to explore other (non-repeating) inputs that induce attention sinks. These sinks, similar to those triggered by the Beginning-of-Sequence (\bos{}) token or repeated tokens, achieve similarly high norms by activating the same neurons as the \bos{} token, ultimately leading to the same undesirable outcome of diverging model behavior.

\subsection{Attack extension}
\label{cluster_attack}

In this section, we investigate the ``attend to others'' (\ref{sss:detect_others}) mechanism and its implementation in different models. We show that this mechanism clusters the tokens into sets, and achieves its goal by applying different attention heads on each group. This understanding allows us to extend the repeating tokens attack of~\citet{nasr2023scalableextractiontrainingdata} to an attack that repeats tokens from the same set. We extract the different sets for different models and present qualitative and quantitative results for the attack performance. 

\textbf{Clustering the tokens.} Since we found two distinct subspaces for first/non-first tokens and we know the first attention layer is responsible for this projection, we reduce the mechanism even further, looking for specific heads that are responsible for this projection. On LLaMa2 we found that for many tokens, a single head's output is enough to project to the relevant subspaces. The relevant head varied between tokens, forming N clusters for tokens. See \Cref{appendix:clusters_of_tokens}.

\textbf{Repeating tokens for cluster attack.} This clustering suggests that attention sinks can arise from tokens in the same group, even without a \bos{} token or exact repetition. By using tokens from a cluster, we trigger the same attention head, 
its function in this circuit is to attend any other tokens, however since we introduce tokens from the same cluster, this head is inevitably attends with high enough score and projections the tokens representation to the "first token subspace". For Example:

The norm after MLP1 of the input: "Sch Com" (both taken from the cluster of Head 4) is [18.4375, 16.5469], or for "elements description" (from Head 30) the MLP1 norm is [19.0156, 14.3359] dramatically higher than any other norm of tokens (besides \bos{}) after the first layer.

\subsection{Attack mitigation}
\label{attack_mitigation}
We present a simple editing method that allows us to prevent the repeating token attack. First, we demonstrate qualitative results of this edit on LLaMA2 when repeated prompts are provided. We verify that this modification does not change other model behavior by evaluating the edited model on standard benchmarks.

\definecolor{background}{RGB}{255, 255, 255}  
\definecolor{keyword}{RGB}{0, 0, 255}        
\definecolor{comment}{RGB}{0, 128, 0}         
\definecolor{string}{RGB}{163, 21, 21}        
\definecolor{number}{RGB}{255, 0, 255}        
\definecolor{identifier}{RGB}{0, 0, 0}        
\definecolor{function}{RGB}{128, 0, 128}      

\lstdefinestyle{mystyle}{
    breaklines=true,
    basicstyle=\scriptsize\ttfamily\color{black},
    language=Python,
    frame=single,
    caption=A manual patch to fix repeated tokens issue.,
    label=code:patch,
    keywordstyle=\bfseries\color{keyword},
    commentstyle=\itshape\color{comment},
    stringstyle=\color{string},
    numberstyle=\color{number},
    identifierstyle=\color{identifier},
    emphstyle=\color{function},
    rulecolor=\color{black},
    backgroundcolor=\color{background},
    morekeywords={def, return, global, if}
}
\begin{lstlisting}[style=mystyle]
tmp_output = None
sink_neuron = 7890
sink_layer = 1

def patch_sink(x, phase):
    global tmp_output

    if phase == "prefill":
        tmp_output = x[:,1, sink_neuron]
        x[:,1:,sink_neuron] = tmp_output

    if phase == "decode":
        x[:,0, sink_neuron] = tmp_output

    return x

patch_block = model.blocks[sink_layer]
patch_block.mlp.up_proj.hook(patch_sink)
\end{lstlisting}

During the prefill stage, the model processes an entire sequence and might include repeats. We store aside the output of the sink neuron for the second input token, this value means ``no-sink'' is valid for non-repeated or non-\bos{} tokens. We then replace all the values of the neuron activity to be ``no-sink''. We also make sure that during decoding no additional sink is set.

\textbf{Edit performance on other tasks.} While our edit mitigates the repeated token attack, it may change to model performance on other unrelated tasks. Next, we verify that the presented edit (\ref{code:patch}) does not change the performance on other standard tasks. As shown in \Cref{tab:model-comparison}, the changes between the performances of the model before and after the patch are negligible.

\section{Limitations}
\begin{figure*}[t]
    \centering
    \includegraphics[width=0.9\textwidth]{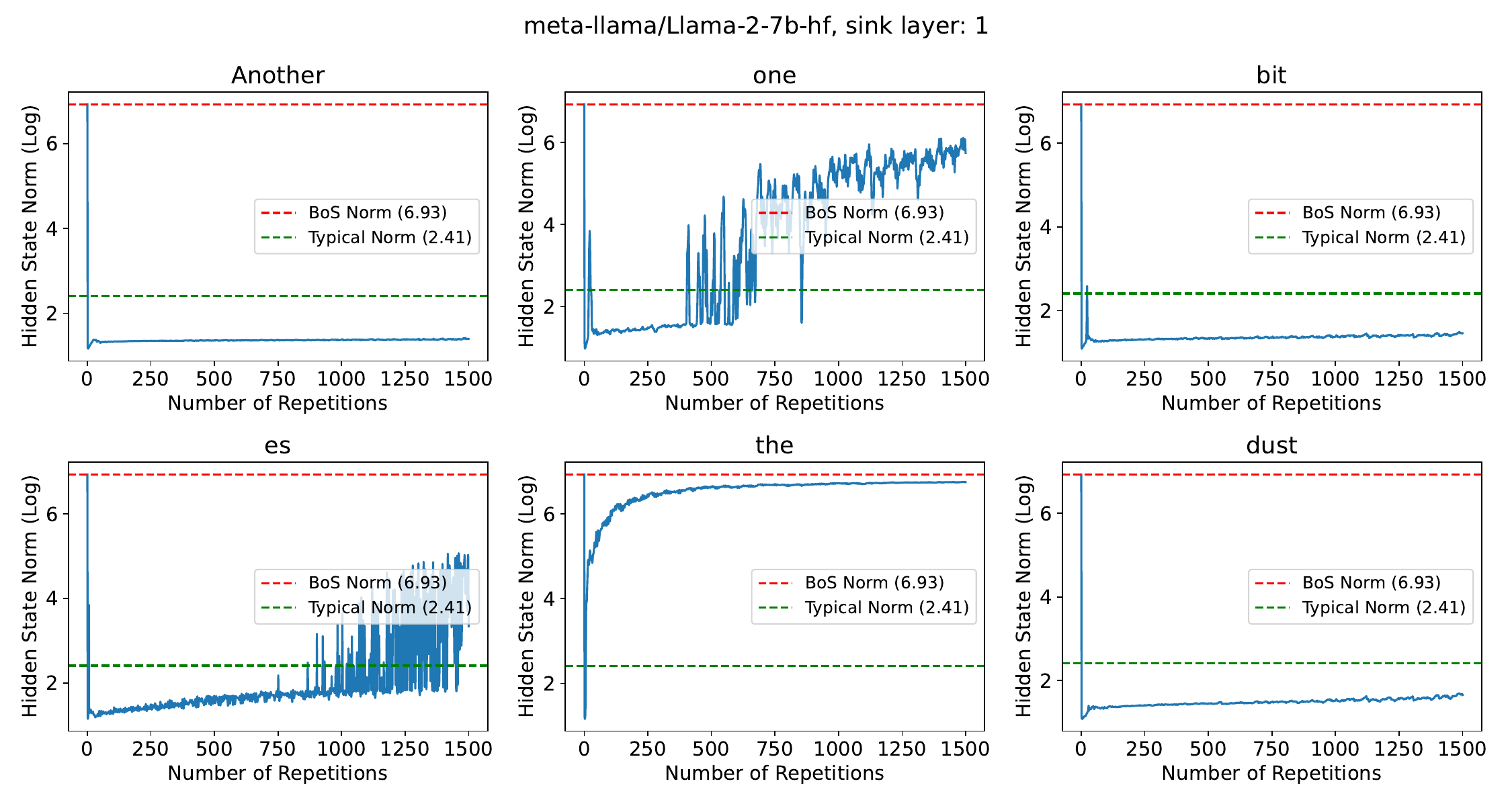}
    \caption{\textbf{Limitations.} For some tokens (``Another'', ``bit'', ``dust''), repetition does not lead to the creation of attention sinks.}
    \label{fig:repeates_but_no_sink}
\end{figure*}
This work has several limitations. First, the mechanism by which training data leakage occurs through attention sinks remains unclear, requiring further investigation. Second, while shared motifs exist across LLMs, the first attention layer's behavior was unique to LLaMa2, highlighting the influence of model-specific details. Third, the efficacy of token repetition in inducing sinks varies across models.  Not all tokens in LLaMa2 induce sinks (see \Cref{fig:repeates_but_no_sink}), suggesting other factors are at play. Fourth, due to the difficulty in assessing training data leakage we did not quantify the success rate of the cluster attack. Finally, our mitigation strategy does not fully address the inherent vulnerability of causally masked Transformers to repeated tokens, as highlighted by \citet{barbero2024transformersneedglassesinformation}. 

\section{Discussion and Future Work}
\label{discussion_and}
This study reveals a clear link between repeated token divergence in LLMs and the attention sink mechanism, in the form of neural circuits.  We demonstrate how the first attention layer's confusion between an initial token and a repeated sequence leads to erroneous attention weights and output divergence, highlighting the inherent tension between fluency and robustness.  Our ``cluster attack'', induce sinks without the need to repeat tokens. Simple defense mechanisms that filter out requests for repeated tokens are easily bypassed by our new attack. Based on neural circuit understanding we developed a more fundamental mitigation for this issue without significant performance impact (see Table \ref{tab:model-comparison}). This work contributes to LLM interpretability by showcasing how mechanistic insights can guide targeted interventions.  While our analysis focused on specific LLMs, we intend to investigate broader generalizability in future work, to explain why repeated token sequences lead to training data extraction as studied by \citet{nasr2023scalableextractiontrainingdata}.  This research underscores the need for ongoing investigation into the inner workings of LLM to build more transparent, reliable, and secure models.

\section*{Acknowledgements}
We thank Arthur Conmy, Christopher A. Choquette-Choo, Guillermo Ortiz-Jimenez, Guy Dar, and Neel Nanda for their valuable comments and feedback on our paper. YG is supported by the Google Fellowship. 
\newpage

\bibliographystyle{abbrvnat}
\nobibliography*
\bibliography{googledeepmind-test}

\newpage
\appendix
\onecolumn

\section{Generalization to other models}
We present results on additional models: \llamaone{}, \llamathree{} and \mistral{}, to illustrate the generality of the circuits we discovered. First, we present (\Cref{fig:ablation_generalized}) patching results similar to \Cref{fig:neuron_ablation}, followed by an attention score for LLaMa-1 (\Cref{fig:teaser2_llama1_a}), and graphs showing extreme norms for repeated tokens in \llamaone{}, \llamathree{} and \mistral{} (\Cref{fig:teaser2_mistral}).

\label{appendix:generalization}
\begin{figure}[!htb]
    \centering
    \includegraphics[width=0.40\textwidth]{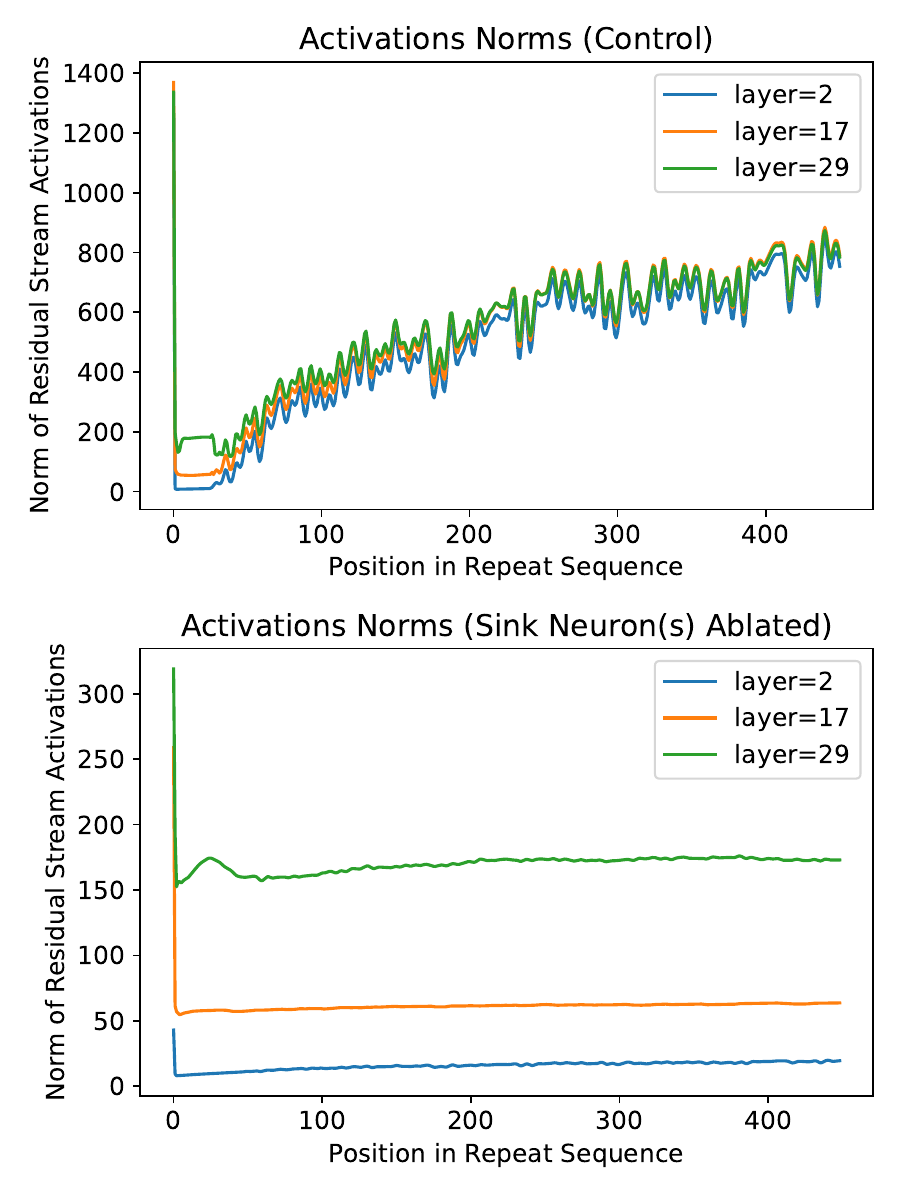}\label{fig:llama1_sink_ablation}
    \hfill
    \includegraphics[width=0.40\textwidth]{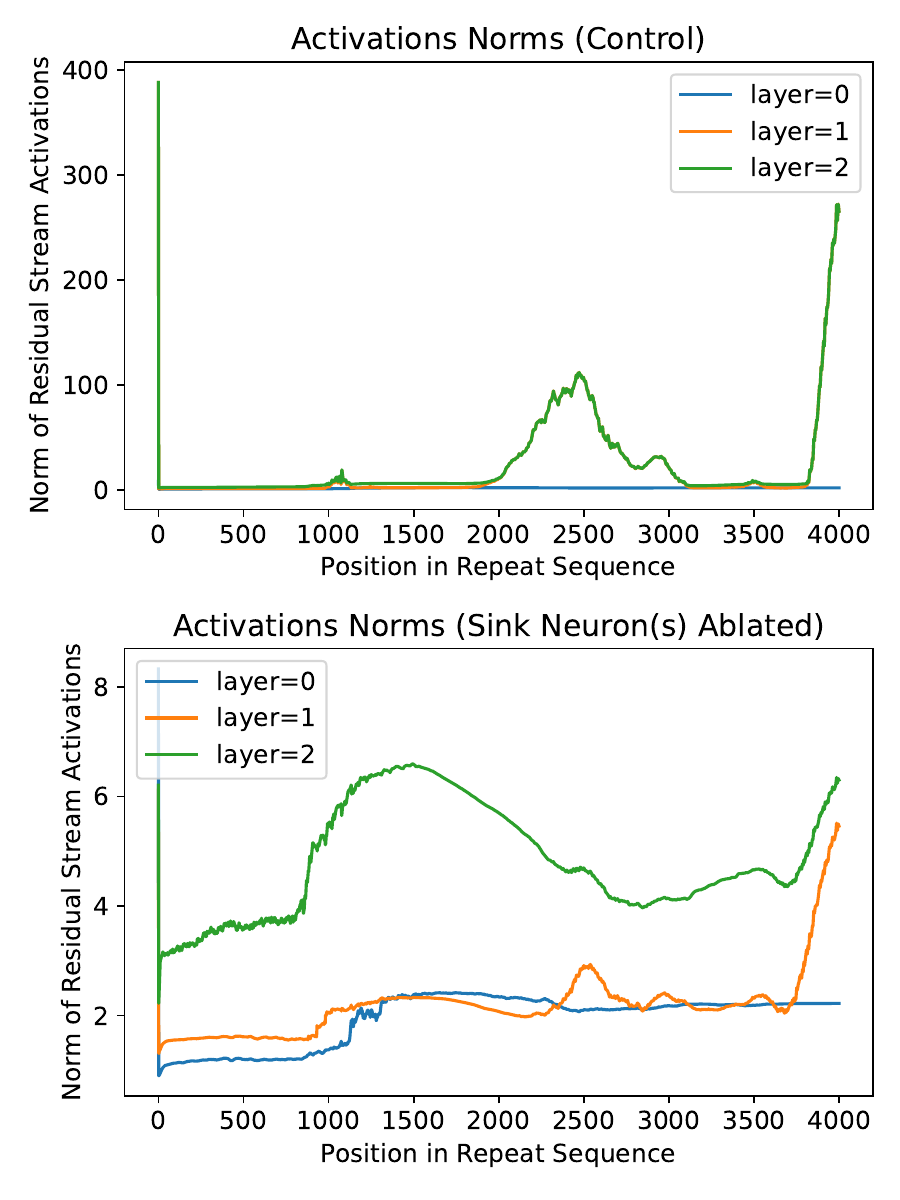}\label{fig:llama3_sink_ablation}
    \includegraphics[width=0.40\textwidth]{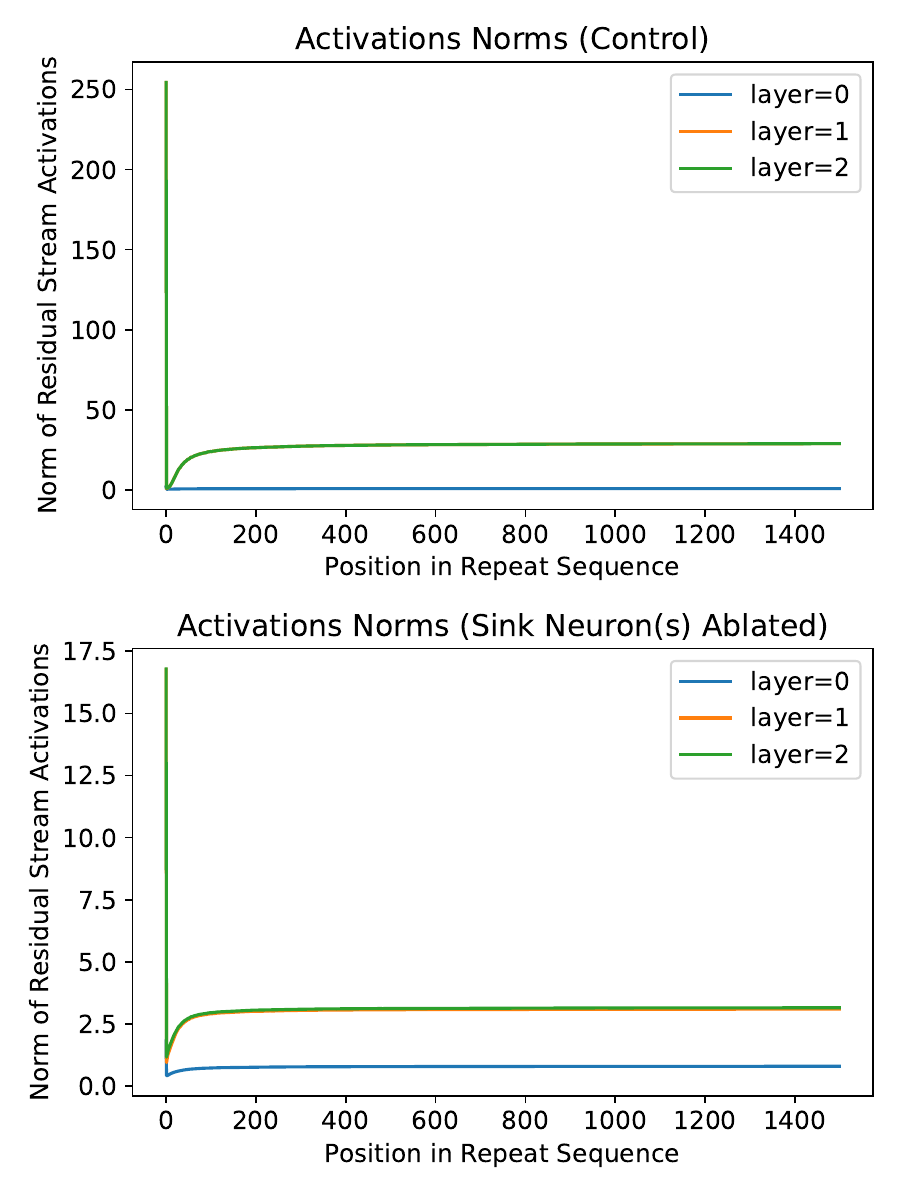}
    \caption{An ablation to sink neurons in other models, extending \Cref{fig:neuron_ablation}, reduces the norms both for \bos{} and repeats. Top Left: \llamaone{}. Top Right: \llamathree{}. Bottom: \mistral{}.}
    \label{fig:ablation_generalized}
\end{figure}

\begin{figure*}
    \includegraphics[width=\textwidth]{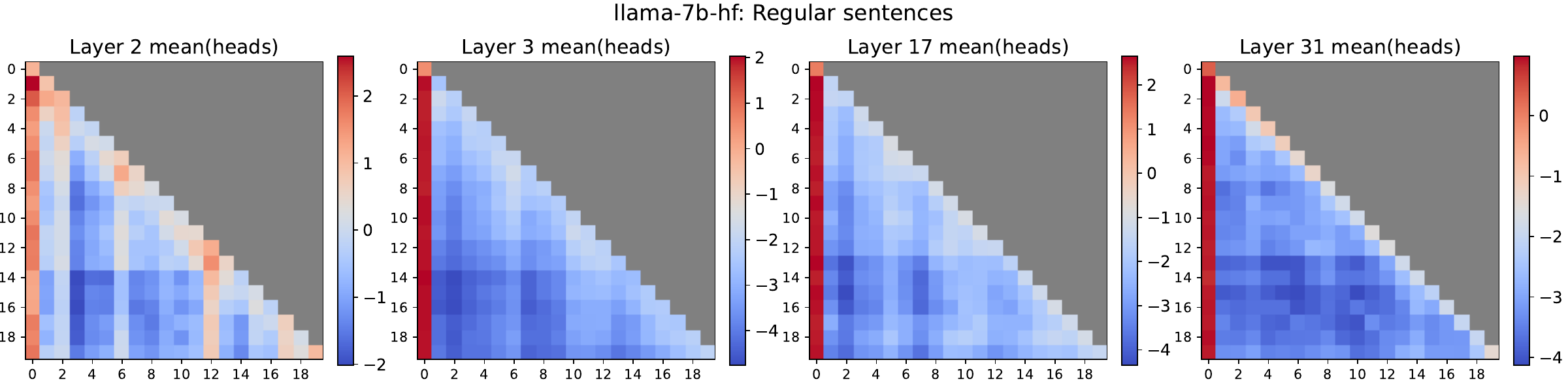}
    \includegraphics[width=\textwidth]{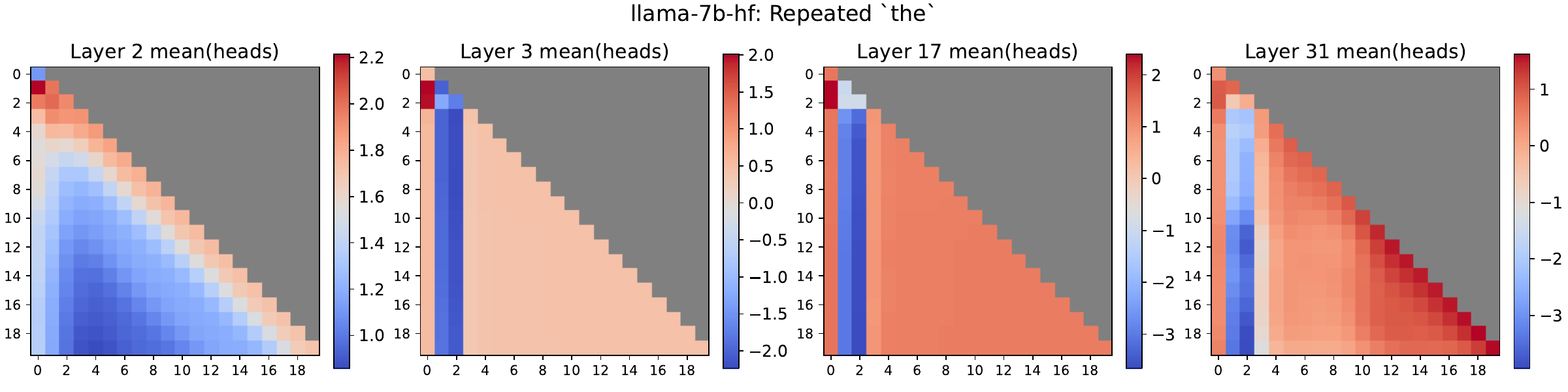}
    \caption{\textbf{Attention scores for layers 2, 3, 17, and 31 of LLaMA1-7B-HF.}  As can be seen in the figure, the attention scores for the repeated 'the' tokens in the top panel are significantly higher than those for other tokens in the sequence. This high attention is comparable to the attention received by the first token in the regular sentences shown in the bottom panel.  This similarity suggests a connection between the attention sink mechanism and the high attention given to repeated tokens.}
    \label{fig:teaser2_llama1_a}
\end{figure*}

\begin{figure*}
    \includegraphics[width=\textwidth]{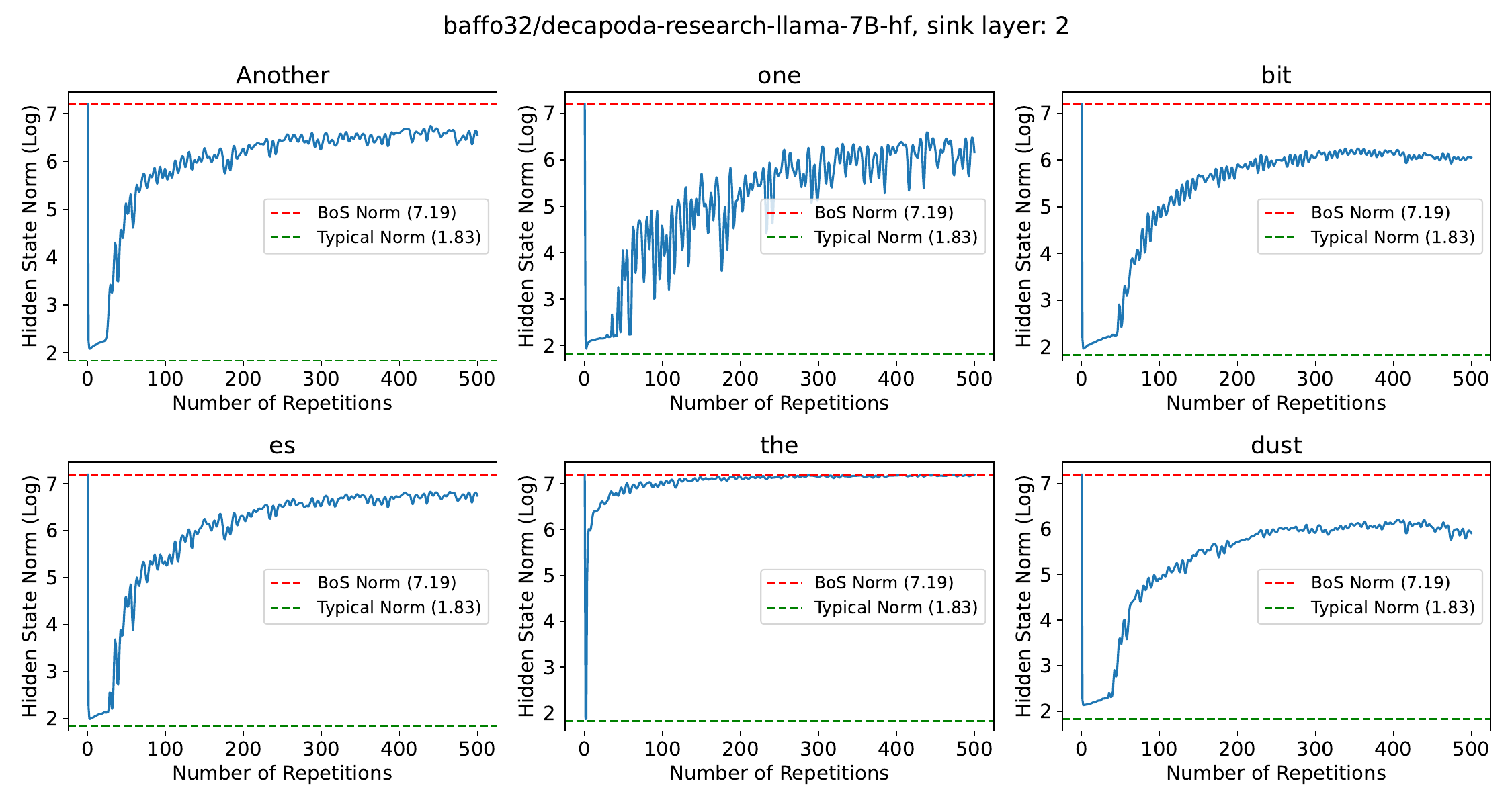}
    \vspace{-1.5em}
    \label{fig:teaser2_llama1_b}
\end{figure*}

\begin{figure*}
    \includegraphics[width=\textwidth]{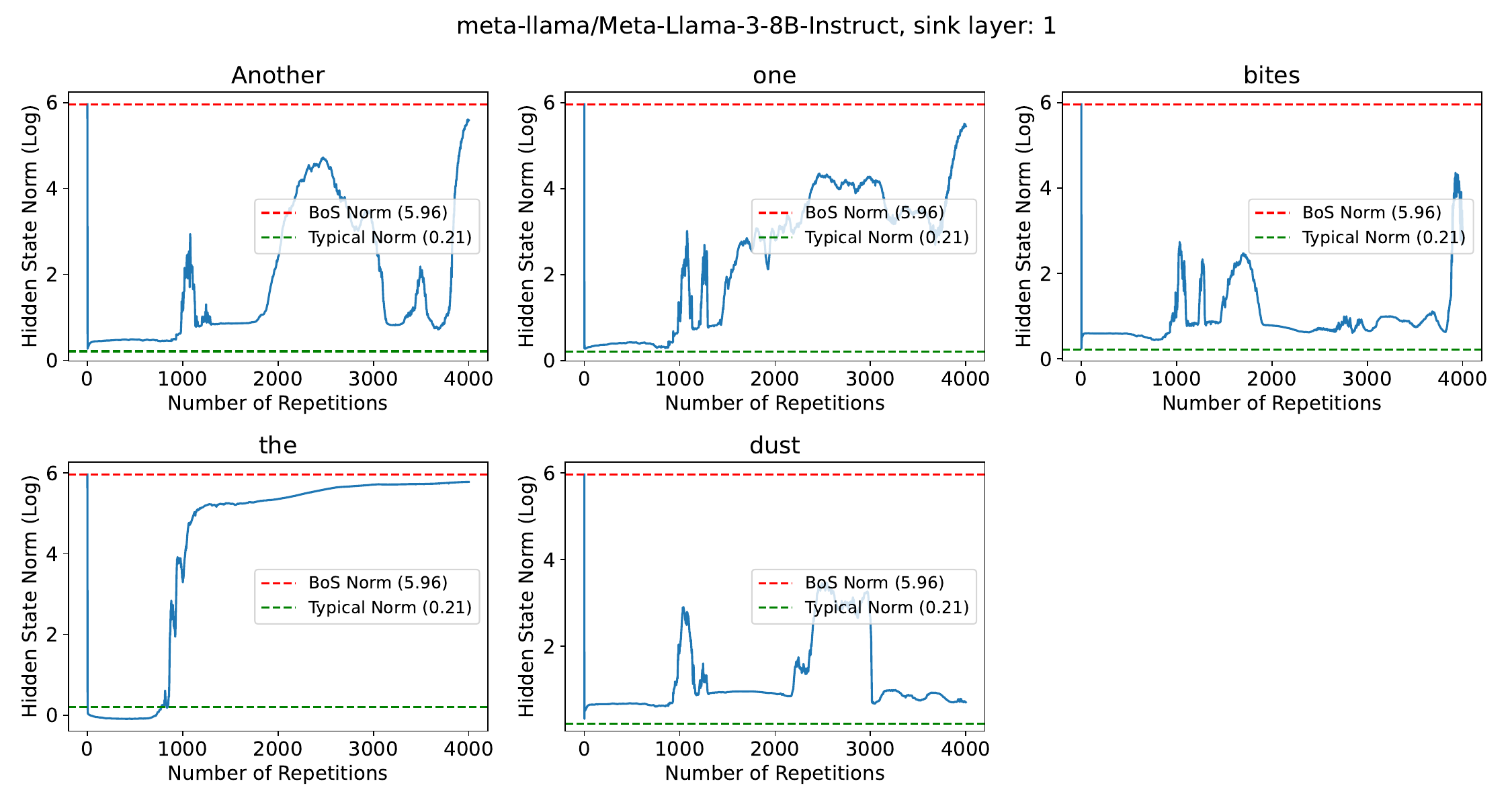}
    \label{fig:teaser2_llama3}
\end{figure*}

\begin{figure*}
    \includegraphics[width=\textwidth]{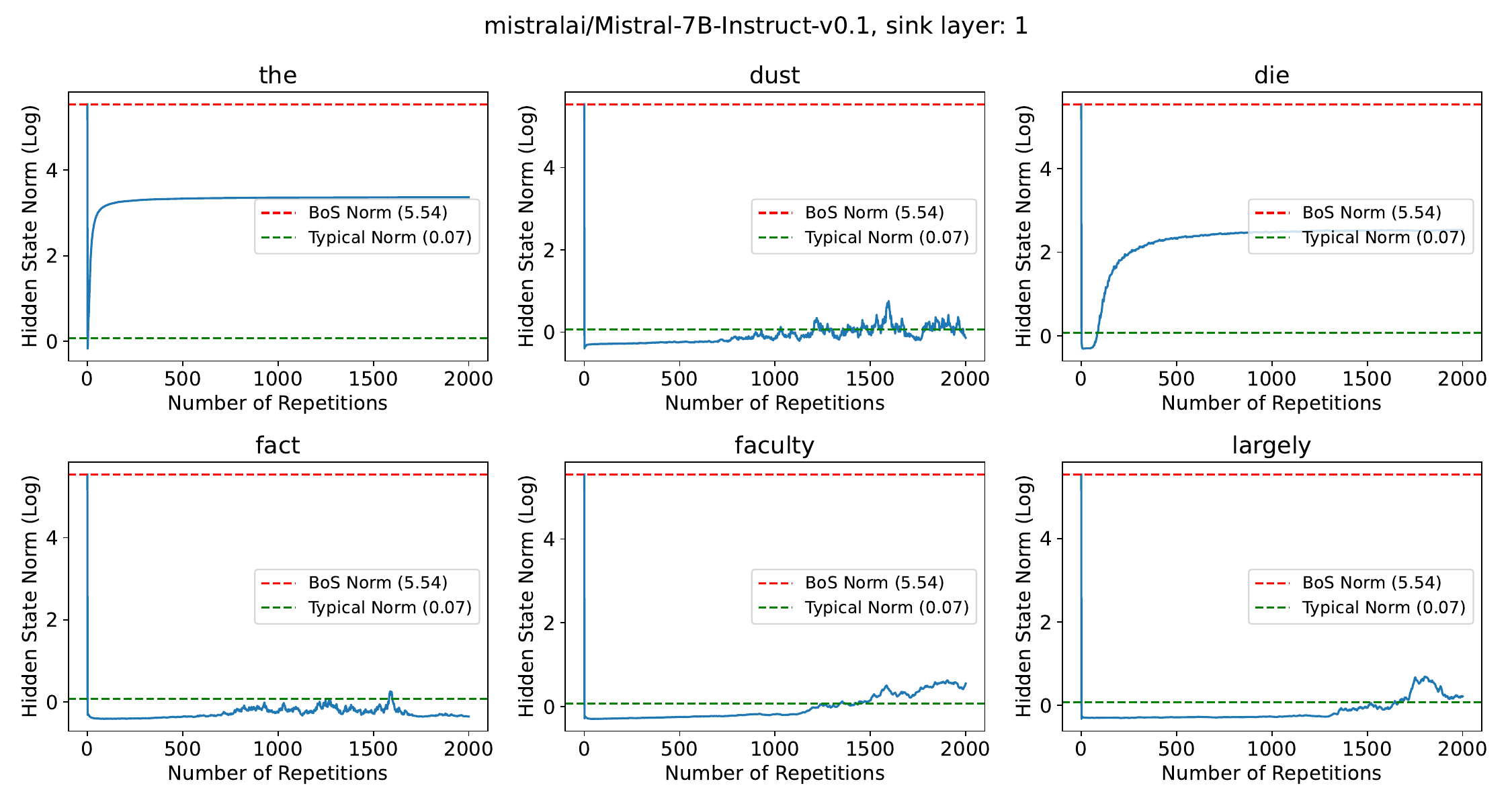}
    \caption{\textbf{Further evidence of extreme norms for repeated tokens in early layers.} We present similar results for \Cref{fig:teaser2} on \llamaone{} \llamathree{} and \mistral{}. The norm of the hidden state at the sink layer for repeating words. As the number of repetitions increases, the norm increases and becomes similar to the norm of the BoS token (0 repetitions). We also show the norm of the \bos{} token and the average norm of tokens from Tiny Shakespeare dataset \citep{tinyshakespear} for comparison.}
    \label{fig:teaser2_mistral}
\end{figure*}

\section{Cluster of Tokens for Detecting First}
\label{appendix:clusters_of_tokens}
{\tiny\texttt{30 ['</s>', '\textbackslash x17', 'et', 'out', 'out', 'function', 'like', 'using', 'does', 'list', 'under', 'look', 'result', 'after', 'off', 'number', 'hand', '],', 'again', 'down', 'create', 'change', 'order', 'server', 'request', 'support', 'group', 'ungen', 'results', 'position', 'appear', 'based', 'effect', 'getting', 'ists', 'response', 'known', 'keep', 'elements', 'functions', 'OS', 'reference', 'took', 'changes', 'returns', 'house', 'September', 'making', 'connection', 'returned', 'among', 'round', 'included', 'attack', 'behind', 'description', 'execute', 'February', 'loss', 'records', '!!', 'requires', 'associated', 'supported', 'layer', 'includes', 'established', 'increase', 'understanding', ')),', 'seeing', 'instances', 'observed', 'developed', '",', 'ended', 'influence', 'lica', 'programs', 'supports', 'increased', 'Station', 'providing', 'Fund', 'approaches', 'considering', 'categories', 'allowing', '",\textbackslash r', '""']}}

{\tiny\texttt{4 ['"', 'com', 'Com', 'ha', 'fol', 'over', 'could', 'cor', 'Mar', 'As', 'Sch', 'Re', 'id', 'sur', 'ass', 'Con', 'trans', 'char', 'Par', 'supp', 'exp', 'dec', 'orig', 'Il', 'mat', 'Ser', 'Ind', 'Bo', 'cour', 'appro', 'Cons', 'Reg', 'Ber', 'beh', 'ear', 'App', 'super', 'dom', 'Ste', 'prom', 'Fil', 'bro', 'bar', 'Mon', '-\textbackslash \textbackslash ', 'gra', 'Est', 'Sh', 'An', 'mer', 'organ', 'Char', 'Cor', 'Mus', 'Min', 'gl', 'hum', 'mot', 'Ang', 'Ma', 'Gener', 'Sub', 'autom', 'Sim', 'Part', 'elect', 'Mag', 'util', 'Vol', 'Bel', 'Mal', 'Dec', 'q', 'Fre', 'Mor', 'Po', 'cam', 'Sal', 'Ass', 'Trans', 'Gre', 'Hol', 'Fin', 'Lo', 'Bro', 'Pres', 'Dem', 'Ret', 'Mer', 'Met', 'Tor', 'CO', 'tor', 'univers', 'Rad', 'Gra', 'Spe', 'Stud', 'Kir', 'Gal', 'Du', 'Ju', 'Mat', 'Bu',, 'Del', 'Leg', 'compar', 'Const', 'Bre', 'Het', 'Cla', 'Gro', 'Equ', 'Att', 'Cour', 'Tre', 'Pan', 'jud', 'Oper', 'Bur', 'Stat', 'cos', 'Tem', 'Hor', 'Flor', 'Cas', 'Maj', 'Sum', 'Ol', 'Mot', 'Vict', 'Ev', 'adm', 'Ext', 'Kr', 'Mur', 'Every', 'cru', 'rect', 'www', 'colon', 'Gall', 'grav', 'Tro', 'Ho', 'Kn', 'Sel', 'Bon', 'fif', 'hol', 'Cro', 'Hot', 'Bol', 'Egy', 'AR', 'Organ', 'decre', 'Leb', 'Dev', 'Dor', 'proc', 'Vari', 'Non', 'Mah', 'Wy', 'Tal', 'reve', 'Bla', 'vir', 'Mol', 'Ak', 'Dig', 'LO', 'eth', 'Blo', 'Rest', 'Cab', 'Wol', 'Fel', 'Vel', 'Kh', 'Kil', 'Altern', 'Bou', 'GL', 'sle', 'Moh', 'dy', 'Eth', 'Cho', 'Sl', 'Wed', 'mul', 'Clar', 'furn', 'Wis', 'Autom', 'Sig']}}

{\tiny\texttt{0 ['before', 'between', 'through', 'without', 'upon', 'against', 'actually', 'whether', 'together', 'significant', 'évrier', 'regarding']}}

\section{Leakage \& Prevention}
\label{appendix:leakage}
We show two examples of repeated inputs on \llamaone{} that diverge model behavior, and show that with our patch \Cref{attack_mitigation} mitigate this issue.

\newcommand*{\escape}[1]{\texttt{\textbackslash#1}}

\CatchFileDef{\patchRepeatPoem}{patch_repeat_poem.txt}{}
\CatchFileDef{\noPatchRepeatPoem}{no_patch_repeat_poem.txt}{}
\CatchFileDef{\patchRepeatCompany}{patch_repeat_company.txt}{}
\CatchFileDef{\noPatchRepeatCompany}{no_patch_repeat_company.txt}{}

\smaller

\begin{longtable} { | p{15cm} | }
    \hline \textbf{After patching} | \patchRepeatPoem \\ \hline
    \textbf{Before patching} | \noPatchRepeatPoem \\ \hline
    \textbf{After patching} | \patchRepeatCompany \\ \hline
    \textbf{Before patching} | \noPatchRepeatCompany \\ \hline
    \caption{Responses for repeated tokens inputs, before and after patching}
    \label{tab:leakage}
\end{longtable}


\normalsize
\section{Proofs}
\label{app:proofs}
In this section we provide the omitted proofs from the main text. We start by clarifying our notation. Let $\mathbf{v}^{(\ell)}_i$, $\mathbf{q}^{(\ell)}_i$, and $\mathbf{k}^{(\ell)}_i$ be the $d$-dimensional value, query, and keys vectors of the $i$-th token at the $\ell$-th layer. Let $ S_n = \left(\mathbf{v}^{(0)}_1, \dots, \mathbf{v}^{(0)}_k, \underbrace{\mathbf{v}^{(0)}, \dots, \mathbf{v}^{(0)}}_{n\text{ times}} \right)$ be a sequence of length $n + k$ composed of $k$ 'prefix' $\mathbf{v}^{(0)}_1, \dots, \mathbf{v}^{(0)}_k$ tokens and $n$ copies of the token $\mathbf{v}^{(0)}$. Instead, let $S^* = \left(\mathbf{v}^{(0)}\right)$ be the singleton list consisting of the same token being repeated in the sequence $S_n$. For example, let $S_n = \left( 'Hello', 'how', 'are', 'you',  \dots, 'you' \right)$ where by, for example, 'Hello' we actually mean the $d$-dimensional vector embedding of the word 'Hello'. Then, the tokens 'Hello', `how, `are’ would be the prefix tokens (i.e. $k=3$) and we would have $S^* = \left('you'\right)$, i.e. `you’ would be the repeated token.

We study the same model architecture to the one studied by \citet{barbero2024transformersneedglassesinformation}. We note that the exact architecture choice is unlikely to affect the final conclusion of our results as we mainly rely on very general properties of Lipschitz continuity of the various components. In our proofs, we rely on the fact that attention \emph{disperses} as the underlying sequence gets longer \citep{velickovic2024softmaxforsharpoutofdistribution}. In particular, we use the fact that we can bound $\alpha_{i,j}^{\ell} \leq 1/n \exp(\delta)$ for some $\delta$ that is a function of the spectral norm of the weights entering the softmax, as shown by  \citet{velickovic2024softmaxforsharpoutofdistribution}.

A Transformer block updates the value $\mathbf{v}^{(\ell)_i}$ of token $i$ at layer $\ell$ as:

\begin{align*} 
\mathbf{z}^{(\ell)}_i &= \sum_{j \leq i} \alpha_{i,j}^{(\ell)} \mathbf{v}^{(\ell)}_j + \mathbf{v}^{(\ell)}_i, \\
\mathbf{v}^{(\ell + 1)}_i &= \psi\left(\mathbf{z}_i^{(\ell)}\right) + \mathbf{z}_i^{(\ell)}
\end{align*}

with $\alpha_{i,j}^{\ell}$ the attention coefficient between $i$ and $j$ of layer $\ell$ and $\psi$ an MLP. We write by $T^{(\ell)}$ the $\ell$-th Transformer layer, that takes as input a sequence $S$ and outputs a sequence $T^{(\ell)}(S)$ such that each element in the sequence is updated as above. We denote by $T^{(\ell)}(S)_i$ the vector corresponding to the $i$-th sequence element after the Transformer is applied. In our proofs we assume that $\psi$ is Lipschitz with constant $\lVert \nabla \psi \rVert$. This is immediate as the domain of $\psi$ is bounded and $\psi$ is assumed to be continuous.

\begin{lemma}
\label{lemma:single-layer-convergence}
Consider the sequences $S_n$ and $S^*$ and a one-layer Transformer $T^{(1)}$. Then the representation of the final token of $S_n$ converges strongly to the representation of the single token of $S^*$, i.e., 

\begin{equation} 
\lim_{n \to \infty} \left\lVert T^{(1)}(S_n)_{n+k} - T^{(1)}(S^*)_{1} \right\rVert = 0.
\end{equation}

\end{lemma}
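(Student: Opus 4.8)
The plan is to track the single Transformer block explicitly and reduce the statement to showing that the pre-MLP aggregated vector at the final position of $S_n$ converges to its counterpart for $S^*$; the MLP and residual are then handled by Lipschitz continuity. First I would write out the two quantities. For the singleton $S^*$, the only available token forces $\alpha_{1,1}^{(0)} = 1$ (softmax over a single logit), so the aggregated vector is $\mathbf{z}^{(0)}_1 = \mathbf{v}^{(0)} + \mathbf{v}^{(0)} = 2\mathbf{v}^{(0)}$ and $T^{(1)}(S^*)_1 = \psi(2\mathbf{v}^{(0)}) + 2\mathbf{v}^{(0)}$. For $S_n$, I would split the attention sum at the final position $n+k$ into the $k$ prefix terms and the $n$ repeat terms:
\begin{equation*}
\mathbf{z}^{(0)}_{n+k} = \underbrace{\sum_{j=1}^{k}\alpha_{n+k,j}^{(0)}\mathbf{v}^{(0)}_j}_{\text{prefix}} + \Big(\sum_{j=k+1}^{n+k}\alpha_{n+k,j}^{(0)}\Big)\mathbf{v}^{(0)} + \mathbf{v}^{(0)},
\end{equation*}
where the repeat terms collapse into a single scalar multiple of $\mathbf{v}^{(0)}$ because every value at the repeated positions equals $\mathbf{v}^{(0)}$ --- this is exactly where I use that the positional encoding acts only on queries and keys, leaving the value vectors position-independent.

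The key step is to show that the prefix mass $A_n := \sum_{j=1}^{k}\alpha_{n+k,j}^{(0)}$ vanishes. Here I would invoke the dispersion bound $\alpha_{n+k,j}^{(0)} \leq \tfrac{1}{n}\exp(\delta)$ of \citet{velickovic2024softmaxforsharpoutofdistribution}, which applies because the boundedness of the token embeddings together with the boundedness of the positional encoding makes all query--key logits uniformly bounded in $n$. Summing over the $k$ (constant) prefix positions then gives $A_n \leq k\exp(\delta)/n \to 0$. Consequently the prefix contribution is bounded in norm by $A_n \max_{j\le k}\lVert \mathbf{v}^{(0)}_j\rVert \to 0$ (the values are bounded), while, since the attention weights sum to one, the repeat coefficient satisfies $\sum_{j=k+1}^{n+k}\alpha_{n+k,j}^{(0)} = 1 - A_n \to 1$. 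Hence $\mathbf{z}^{(0)}_{n+k} \to \mathbf{v}^{(0)} + \mathbf{v}^{(0)} = 2\mathbf{v}^{(0)} = \mathbf{z}^{(0)}_1$.

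Finally I would lift this convergence through the block. Writing both outputs in the form $\psi(\mathbf{z}) + \mathbf{z}$ and applying the triangle inequality together with the Lipschitz constant $\lVert\nabla\psi\rVert$ yields
\begin{equation*}
\big\lVert T^{(1)}(S_n)_{n+k} - T^{(1)}(S^*)_1 \big\rVert \leq \big(\lVert\nabla\psi\rVert + 1\big)\,\big\lVert \mathbf{z}^{(0)}_{n+k} - \mathbf{z}^{(0)}_1 \big\rVert \longrightarrow 0,
\end{equation*}
which is the claim. I expect the main obstacle to be precisely the prefix-mass step: one must justify that the logits remain uniformly bounded as $n$ grows so that the $O(1/n)$ dispersion estimate holds uniformly in $n$, and one must observe that the query at position $n+k$ in $S_n$ need not match the (trivial) query of $S^*$ --- the two limits still coincide because the identical repeated values dominate the aggregation while the prefix influence is washed out regardless of the precise query. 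Everything else (the collapse of the identical repeat values, the boundedness of the values, and the Lipschitz lift through $\psi$) is routine.
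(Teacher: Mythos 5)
Your proposal is correct and follows essentially the same route as the paper's proof: the same split of the attention sum at position $n+k$ into prefix and repeat terms, the same dispersion bound $\alpha_{n+k,j} \leq \frac{1}{n}\exp(\delta)$ from \citet{velickovic2024softmaxforsharpoutofdistribution} to kill the prefix mass (the paper packages this as the bound $\frac{2rk\exp(\delta)}{n}$), and the same Lipschitz lift through $\psi$ giving the factor $\left(\lVert \nabla \psi \rVert + 1\right)$. The only differences are presentational, e.g.\ you compute $\mathbf{z}^{(0)}_1 = 2\mathbf{v}^{(0)}$ up front rather than inline.
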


 \begin{proof} 
It is required to show that there exists $N$ such that for all $n > N$ we have that $\left\lVert T(S_n)_{n+k} - T(S^*)_{1} \right\rVert < \epsilon$ for all $\epsilon > 0$. We use the simple estimate that $\alpha_{i,j}^{(\ell)} \leq 1/n \exp(\delta)$ with $n$ the sequence length and $\delta$ some constant that takes into account the largest difference in the activations (see \citet{velickovic2024softmaxforsharpoutofdistribution}). We also let $r = \max_{j} \left \lVert\mathbf{v_j}\right \rVert$, which always exists due to the finite vocabulary and continuity of Transformer operations (again, see \citet{velickovic2024softmaxforsharpoutofdistribution} for further details). 

As the Transformer only has one layer, we drop layer superscripts for clarity. We start by showing that it suffices to show that $\left \lVert \mathbf{z}_{n+k}^* - \mathbf{z}_{1} \right \rVert < \epsilon$ as this implies:

\begin{align*}
    \left\lVert T(S_n)_{n+k} - T(S^*)_{1} \right\rVert &= \left\lVert  \psi\left(\mathbf{z}_{n+k}\right) + \mathbf{z}_{n+k} - \psi\left(\mathbf{z}_{1}^*\right) - \mathbf{z}_{1}^*\right\rVert \\
    &\leq \left\lVert  \psi\left(\mathbf{z}_{n+k}\right) - \psi\left(\mathbf{z}_{1}^*\right)\right\rVert + \left\lVert  \mathbf{z}_{n+k} - \mathbf{z}_{1}^*\right\rVert \\
    &< \left(\lVert \nabla \psi \rVert + 1\right) \epsilon.
\end{align*}

Since $\lVert \nabla \psi \rVert$ is constant this gives us what is desired. We now show that $\left \lVert \mathbf{z}_{n+k}^* - \mathbf{z}_{1} \right \rVert < \epsilon$:

\begin{align*}
\left\lVert \mathbf{z}_{n+k}^* - \mathbf{z}_{1} \right\rVert &= \left\lVert \left( \sum_{j \leq n + k} \alpha_{n+k,j} \mathbf{v}_j + \mathbf{v}\right) - \left(2\mathbf{v}\right)\right\rVert \\
&= \left\lVert \sum_{j \leq n + k} \alpha_{n+k,j} \mathbf{v}_j - \mathbf{v}\right\rVert \\
&= \left\lVert  \sum_{j \leq k} \alpha_{n+k,j} \mathbf{v}_j  +  \sum_{k < j \leq n + k} \alpha_{n+k,j} \mathbf{v}- \mathbf{v}\right\rVert \\
&= \left\lVert  \sum_{j \leq k} \alpha_{n+k,j} \mathbf{v}_j  +   \left(1 - \sum_{j \leq k} \alpha_{n+k,j} \right)\mathbf{v} - \mathbf{v}\right\rVert \\
&= \left\lVert  \sum_{j \leq k} \alpha_{n+k,j} \mathbf{v}_j  -  \left( \sum_{j \leq k} \alpha_{n+k,j} \right)\mathbf{v} \right\rVert \\ 
&\leq \sum_{j \leq k} \alpha_{n+k,j} \left\lVert \mathbf{v}_j \right\rVert +  \left( \sum_{j \leq k} \alpha_{n+k,j} \right)\left\lVert \mathbf{v}\right\rVert \\
&\leq \frac{2rk \exp(\delta)}{n}.
\end{align*}

As $r, k, \delta$ are all constants, then we have that $\left\lVert T^{(1)}(S_n)_{n+k} - T^{(1)}(S^*)_{1} \right\rVert $ decays as $O(1/n)$ and is thus below any $\epsilon$ for large enough $n$.
 \end{proof}

Next, we extend the above result to a Transformer with $L$ layers, i.e. $T_L = T^{(L)} \circ T^{(L-1)} \circ \cdots \circ T^{(1)}$, where each $T^{(\ell)}$ denotes the mapping performed at layer $\ell$. Note that we use the superscript $T^{(\ell)}$ to denote the $\ell$-th Transformer layer, while the subscript $T_\ell$ to denote the composition of the first $\ell$ Transformer layers. We assume that each layer is Lipschitz continuous with respect to its input, with Lipschitz constant $\left \lVert \nabla T^{(\ell)} \right \rVert$. One can always find a Lipschitz constant as Transformers are continuous and, in our case, have a compact domain due to the finite vocabulary size. 

\begin{theorem}
Let $T_L$ be a Transformer with $L$ layers and suppose that each layer is Lipschitz continuous with constant $\left \lVert \nabla T^{(\ell)}\right \rVert$. Then for the sequences $S_n$ and $S^*$, 
\begin{equation}
\lim_{n\to\infty} \left\lVert T^{(L)}(S_n)_{n+k} - T^{(L)}(S^*)_{1} \right\rVert = 0.
\end{equation}
\end{theorem}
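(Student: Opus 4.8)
The plan is to argue by induction on the number of layers $L$, using \Cref{lemma:single-layer-convergence} as the base case $L = 1$. For the inductive step I would write $T_L = T^{(L)} \circ T_{L-1}$ and set $U_n = T_{L-1}(S_n)$, $U^* = T_{L-1}(S^*)$, so that the inductive hypothesis gives $\lVert (U_n)_{n+k} - (U^*)_1 \rVert \to 0$ and the goal becomes $\lVert T^{(L)}(U_n)_{n+k} - T^{(L)}(U^*)_1 \rVert \to 0$.

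The main obstacle is that $T^{(L)}$ is a sequence-to-sequence map: the updated value of the last token depends, through the attention sum $\mathbf{z}_{n+k} = \sum_{j \le n+k}\alpha_{n+k,j}(U_n)_j + (U_n)_{n+k}$, on the \emph{entire} sequence $U_n$ and not only on its last entry. Hence convergence of the last token alone does not let one invoke Lipschitz continuity of $T^{(L)}$ directly. Compounding this, $U_n$ no longer has the clean ``fixed prefix followed by $n$ identical copies'' shape that \Cref{lemma:single-layer-convergence} exploited: after one layer the repeated positions sit in different causal contexts (and see bounded-but-distinct positional encodings), so the tail of $U_n$ is a \emph{convergent} but non-constant sequence rather than a constant one.

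To get around this I would strengthen the induction hypothesis and exploit causal masking. Because token $i$ of any layer depends only on the first $i$ input tokens, $T_\ell(S_n)_i$ is independent of $n$ whenever $n + k \ge i$; this defines a single infinite sequence $\mathbf{u}^{(\ell)}_1, \mathbf{u}^{(\ell)}_2, \dots$ with $T_\ell(S_n)_i = \mathbf{u}^{(\ell)}_i$. I would then prove, by induction on $\ell$, the stronger statement that $\mathbf{u}^{(\ell)}_i \to \mathbf{w}^{(\ell)} := T_\ell(S^*)_1$ as $i \to \infty$; the theorem is exactly the case $\ell = L$, $i = n + k$. The step from $\ell - 1$ to $\ell$ reduces to showing $\mathbf{z}_i = \sum_{j \le i}\alpha_{i,j}\mathbf{u}^{(\ell-1)}_j + \mathbf{u}^{(\ell-1)}_i \to 2\mathbf{w}^{(\ell-1)}$, after which Lipschitz continuity of $\psi$ finishes it, since applying layer $\ell$ to the singleton $(\mathbf{w}^{(\ell-1)})$ yields $\psi(2\mathbf{w}^{(\ell-1)}) + 2\mathbf{w}^{(\ell-1)} = \mathbf{w}^{(\ell)}$.

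The heart of the argument is an $\epsilon$-$M$-$N$ split of this attention sum. Given $\epsilon > 0$, use the hypothesis $\mathbf{u}^{(\ell-1)}_j \to \mathbf{w}^{(\ell-1)}$ to pick $M$ with $\lVert \mathbf{u}^{(\ell-1)}_j - \mathbf{w}^{(\ell-1)} \rVert < \epsilon$ for all $j > M$; writing $\sum_{j \le i}\alpha_{i,j}\mathbf{u}^{(\ell-1)}_j - \mathbf{w}^{(\ell-1)} = \sum_{j \le i}\alpha_{i,j}(\mathbf{u}^{(\ell-1)}_j - \mathbf{w}^{(\ell-1)})$ (valid since the $\alpha_{i,j}$ sum to one), the tail $j > M$ contributes at most $\epsilon$ while the head is a \emph{fixed} number $M$ of bounded terms each weighted by $\alpha_{i,j} \le \exp(\delta)/i$, which vanishes as $i \to \infty$ exactly as in \Cref{lemma:single-layer-convergence}. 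Boundedness of every $\mathbf{u}^{(\ell-1)}_j$ --- needed to control the head --- follows from the finite vocabulary and continuity of the layers (compact domain), and the $1/i$ attention decay is the dispersion bound of \citet{velickovic2024softmaxforsharpoutofdistribution}. I expect this interchange of limits (the non-constant convergent tail forcing the split, rather than the single clean telescoping available in the one-layer case) to be the only genuinely delicate point; everything else is triangle inequalities and Lipschitz bounds.
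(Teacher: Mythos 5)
Your proof is correct, and it is actually more careful than the paper's own argument on the one point that matters. The paper also inducts on depth with \Cref{lemma:single-layer-convergence} as the base case, but closes the inductive step in one line: writing $T_{\ell+1} = T^{(\ell+1)} \circ T_{\ell}$, it bounds $\lVert T_{\ell+1}(S_n)_{n+k} - T_{\ell+1}(S^*)_{1} \rVert \leq \lVert \nabla T^{(\ell+1)} \rVert \, \lVert T_{\ell}(S_n)_{n+k} - T_{\ell}(S^*)_{1} \rVert$, i.e.\ it invokes per-layer Lipschitz continuity as if the layer's output at the last position were a function of the last-position input alone. That is precisely the obstacle you identify: $T_{\ell}(S_n)$ and $T_{\ell}(S^*)$ have different lengths, the attention sum at position $n+k$ depends on the entire intermediate sequence, and after one layer the tail of that sequence is convergent but no longer constant, so last-token closeness at layer $\ell$ does not by itself yield last-token closeness at layer $\ell+1$. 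Your strengthened induction supplies the missing content: causal masking makes $T_{\ell}(S_n)_i$ independent of $n$, defining the infinite sequence $\mathbf{u}^{(\ell)}_i$; you carry the whole-tail hypothesis $\mathbf{u}^{(\ell)}_i \to \mathbf{w}^{(\ell)} = T_{\ell}(S^*)_1$; and you close the step with the $\epsilon$-$M$ split of $\sum_{j \leq i} \alpha_{i,j}\bigl(\mathbf{u}^{(\ell-1)}_j - \mathbf{w}^{(\ell-1)}\bigr)$, controlling the tail by convergence and the fixed head by the $O(1/i)$ dispersion bound of \citet{velickovic2024softmaxforsharpoutofdistribution} — in effect re-running the dispersion estimate at every layer rather than only once in the base case. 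The bookkeeping also checks out: the singleton recursion $\mathbf{w}^{(\ell)} = \psi(2\mathbf{w}^{(\ell-1)}) + 2\mathbf{w}^{(\ell-1)}$ follows from $\alpha_{1,1} = 1$, and the per-layer uniform bounds $r$ and constants $\delta$ follow from the finite vocabulary and compactness, exactly as the paper argues for one layer. In short: the paper's version is shorter and conveys the intended intuition that convergence propagates through Lipschitz layers, but as written the key inequality does not typecheck and is only repaired by proving that the last-token map is close to a fixed limit uniformly in sequence length — which is exactly what your dispersion argument establishes. Yours is the rigorous version of the theorem as stated.
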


\begin{proof}
We prove the result by induction on the number of layers $L$. The base case follows from Lemma \ref{lemma:single-layer-convergence}. Now assume that the result holds for a Transformer with $\ell$ layers, that is,

\begin{equation*}
\left\lVert T_{\ell}(S_n)_{n+k} - T_{\ell}(S^*)_{1} \right\rVert < \epsilon_n,    
\end{equation*}

with $\epsilon_n\to 0$ as $n\to\infty$. Consider an $(\ell+1)$-layer Transformer $T_{\ell+1} = T^{(\ell+1)} \circ T_{\ell}$. Since $T^{(\ell+1)}$ is Lipschitz continuous with constant $\left \lVert \nabla T^{(\ell+1)} \right \rVert$, we have
\[
\begin{split}
\left\lVert T_{\ell+1}(S_n)_{n+k} - T_{\ell+1}(S^*)_{1} \right\rVert 
&= \left\lVert T^{(\ell+1)}\bigl(T_{\ell}(S_n)\bigr)_{n+k} - T^{(\ell+1)}\bigl(T_{\ell}(S^*)\bigr)_{1} \right\rVert\\[1mm]
&\leq \left \lVert \nabla T^{(\ell+1)} \right \rVert \, \left\lVert T_{\ell}(S_n)_{n+k} - T_{\ell}(S^*)_{1} \right\rVert \\
&\leq \left \lVert \nabla T^{(\ell+1)} \right \rVert\, \epsilon_n.
\end{split}
\]
Since $\epsilon_n\to 0$ as $n\to\infty$, it follows that
\[
\lim_{n\to\infty} \left\lVert T_{\ell+1}(S_n)_{n+k} - T_{\ell+1}(S^*)_{1} \right\rVert = 0.
\]

We note as a technicality above that we tacitly ignore that we really consider the output of the model after applying a projection $\pi$ that only considers the projection to the last token. This detail can be fortunately ignored because the projection mapping $\pi$ has Lipschitz constant $1$.

Thus, by induction, the result holds for any finite number of layers $L$.
\end{proof}

\end{document}